\newcommand\copyrighttext{%
    \footnotesize © 2024 the authors. This article is licensed under a \href{https://creativecommons.org/licenses/by/4.0/}{Creative Commons Attribution 4.0 International license} (CC BY 4.0). 
}
\newcommand\copyrightnotice{%
    \begin{tikzpicture}[remember picture,overlay]
        \node[anchor=south, yshift=10pt] at (current page.south) {%
            \makebox[\textwidth]{\copyrighttext}%
        };
    \end{tikzpicture}%
}
\title{Attention Normalization Impacts Cardinality Generalization in Slot Attention}
\author{\name Markus Krimmel \email markus.krimmel@tuebingen.mpg.de \\
      \addr Embodied Vision Group, Max Planck Institute for Intelligent Systems
      \AND
      \name Jan Achterhold \email jan.achterhold@tuebingen.mpg.de \\
      \addr Embodied Vision Group, Max Planck Institute for Intelligent Systems\thanks{Work done during PhD studies at MPI-IS. Jan Achterhold is now with Robert Bosch GmbH - Corporate Research, and  the Bosch Center for Artificial Intelligence (BCAI), Renningen, Germany.}
      \AND
      \name Joerg Stueckler \email joerg.stueckler@uni-a.de \\
      \addr Embodied Vision Group, Max Planck Institute for Intelligent Systems\\
      \addr Intelligent Perception in Technical Systems Group, University of Augsburg}
\newcommand{\R}{\mathbb{R}}
\DeclareMathOperator{\Var}{Var}
\DeclareMathOperator{\layernorm}{LayerNorm}
\DeclareMathOperator{\GL}{GL}
\DeclareMathOperator{\aff}{aff}
\DeclareMathOperator{\diag}{diag}
\newcommand{\logsumexp}[2][]{
\ifthenelse { \equal {#1} {} }  %
    {\texttt{logsumexp(}#2\texttt{)}}   %
    {\texttt{logsumexp(}#2\texttt{, axis=}#1\texttt{, keepdim=True)}}   %
}
\newcommand{\dimin}{{D_{\text{input}}}}
\newcommand{\dimslot}{{D_{\text{slot}}}}
\newtheorem{lemma}{Lemma}
\renewcommand\textproc{\texttt}
\newcommand{\vx}{\bm{x}}
\newcommand{\vu}{\bm{u}}
\newcommand{\vgamma}{\bm{\gamma}}
\newcommand{\vk}{\bm{k}}
\newcommand{\vq}{\bm{q}}
\newcommand{\vval}{\bm{v}}
\newcommand{\vw}{\bm{w}}
\newcommand{\mM}{\bm{M}}
\newcommand{\mGamma}{\bm{\Gamma}}
\newcommand{\vtheta}{\bm{\theta}}
\newtheorem{prop}{Proposition}
\renewcommand*{\mathellipsis}{...}
\begin{document}

\copyrightnotice
\maketitle

\begin{abstract}
Object-centric scene decompositions are important representations for downstream tasks in fields such as computer vision and robotics. The recently proposed Slot Attention module, already leveraged by several derivative works for image segmentation and object tracking in videos, is a deep learning component which performs unsupervised object-centric scene decomposition on input images. It is based on an attention architecture, in which latent slot vectors, which hold compressed information on objects, attend to localized perceptual features from the input image. In this paper, we demonstrate that design decisions on normalizing the aggregated values in the attention architecture have considerable impact on the capabilities of Slot Attention to generalize to a higher number of slots and objects as seen during training. 
We propose and investigate alternatives to the original normalization scheme which increase the generalization capabilities of Slot Attention to varying slot and object counts, resulting in performance gains on the task of unsupervised image segmentation. The newly proposed normalizations represent minimal and easy to implement modifications of the usual Slot Attention module, changing the value aggregation mechanism from a weighted mean operation to a scaled weighted sum operation.
\end{abstract}

\section{Introduction}

Object-wise scene decompositions are ubiquitous in computer vision, robotics, and related disciplines such as reinforcement learning, since the state and actions in the environment are naturally represented in relation to objects. 
Over recent years, unsupervised learning of object-centric representations from unlabelled images and video has attracted significant interest in the machine learning community~\citep{neuralem,genesis,original_paper}.
The Slot Attention architecture~\citep{original_paper} decomposes a two-dimensional RGB input image object-wise using an attention mechanism which updates slots, holding information about objects, in a recurrent manner. 
In \citep{original_paper}, it was used for set prediction and image segmentation tasks on relatively simple renderings of 2D or 3D scenes, such as CLEVR~\citep{clevr_dataset}. 
Later in~\citep{dinosaur}, Slot Attention has also been successfully applied to the task of unsupervised segmentation of more realistic images (MOVi, \citep{kubric}). 
Detecting and tracking objects in videos using Slot Attention is described in \citep{kipf2022conditional, elsayed2022savi}.
Despite its empirical success, a theoretical explanation of its inner workings and the inductive biases which lead to the emergence of object-wise decompositions in Slot Attention is still under active research~\citep{ocl_nested,implicit_sa}.

In this paper we investigate design choices on the \emph{normalization of aggregated attention values} in Slot Attention. 
We find that the normalization proposed in~\citep{original_paper} leads to suboptimal foreground segmentation performance during inference with higher number of objects or slots than used for training the model. %
We also investigate two alternative normalization approaches, give theoretical insights on their behavior, and assess their performance in relation to the original Slot Attention baseline.
We demonstrate that these different approaches for normalizing the aggregated values can have a significant impact on the generalization of Slot Attention to a varying number of slots and objects during inference.

\section{Background}
\subsection{Slot Attention}
The Slot Attention module~\citep{original_paper} is given a set of $N$ input tokens $\tilde\vx_n \in \R^\dimin, n \in \{1,..., N\}$ and iteratively refines a set of $K$ slots $\tilde\vtheta_k \in \R^\dimslot, k \in \{1,..., K\}$. 
In an object-wise scene decomposition scenario, slots correspond to latent variables holding information on objects, while the input tokens are localized image features, e.g., computed by a convolutional neural network. 
Slots bind to input tokens via a dot-product attention mechanism~\citep{luong2015effective}. 
Learned linear maps $k$ and $q$ extract $D$-dimensional keys and queries from the layer-normalized~\citep{layer_norm} input tokens $\vx_n := \layernorm(\tilde\vx_n)$ and layer-normalized slots $\vtheta_k := \layernorm(\tilde\vtheta_k)$, respectively. In our case, we always have $\dimin = D$ and the layer normalization modules that produce $\vx_n$ and $\vtheta_k$ do not share parameters.

For the attention mechanism, an unnormalized $N\times K$ matrix $\mM$ of dot products is formed from the keys $\vk_n := k(\vx_n)$ and queries $\vq_k := q(\vtheta_k)$. On each row of $\mM$, a Softmax operator is then applied, yielding $\mGamma  = ( \gamma_{n, k} ) \in [0, 1]^{N \times K}$:
\noindent\begin{tabularx}{\textwidth}{@{}XX@{}}
  \begin{equation}
  M_{n, k} := \frac{1}{\tau}k(\vx_n)^\top q(\vtheta_k) = \frac{1}{\tau} \vk_n^\top  \vq_k
    \label{eq:score_matrix}
  \end{equation} &
  \begin{equation}
  \gamma_{n, k} := \frac{\exp{M_{n, k}}}{\sum_{k' = 1}^K \exp{M_{n, k'}}}.
    \label{eqn:2}
  \end{equation}
\end{tabularx}
With this, each row $\vgamma_{n,:}$ may be interpreted as the probability of an input token $n$ to be assigned to a particular slot $k$. 
The constant $\tau$ corresponds to a temperature parameter which is chosen to be $\sqrt{D}$.

A linear map $v: \R^\dimin \to \R^D$ extracts values from the input tokens and the matrix $\mGamma$ is used to accumulate values into unnormalized slot-wise update codes:
\begin{equation}
    \tilde{\vu}_k := \sum_{n=1}^N \gamma_{n, k}v(\vx_n)
    \label{eq:unnormalized_aggregation}
\end{equation}
With the motivation to improve the stability of the attention mechanism, Slot Attention performs a normalization on the update codes. Namely, the sum in~\eqref{eq:unnormalized_aggregation} is scaled in such a way that it becomes a weighted mean of the values $v(\vx_n)$, i.e.:
\begin{equation}
    \vu_k := \frac{\tilde{\vu}_k}{\sum_{n=1}^N \gamma_{n, k}}
\end{equation}
This normalization scheme is termed \emph{weighted mean}. \citet{original_paper} discuss two ablations of this normalization. The \emph{weighted sum} scheme normalizes the update code by multiplication with a constant, i.e. $\vu_k := \frac{1}{C}\tilde\vu_k$. In the ablation study in~\citep{original_paper}, the value chosen for $C$ is not discussed, and it must be assumed that $C=1$ was chosen. The second ablation of~\citep{original_paper} is termed \emph{layer normalization} and uses a layer normalization module that is shared across slots for normalization. Concretely, the normalized update code is computed as $\vu_k := \layernorm(\tilde\vu_k)$. We refer to Appendix~\ref{appendix:lemma-layernorm} for an exact definition of layer normalization.

For each slot $k$, the aggregated value~$\vu_k$ is used to update the latent representation $\tilde\vtheta_k$ via a gated recurrent unit (GRU)~\citep{gru} and a residual multilayer perceptron with $\tilde\vtheta_{k}^{\text{new}} := \texttt{update}(\tilde\vtheta_k, \vu_k)$.

\subsection{Von Mises-Fisher Distributions}
Von Mises-Fisher (vMF) distributions~\citep{fisher_vmf} are probability distributions on the unit $(d-1)$-sphere in~$\R^d$. Typically, they are parametrized by a mean direction $\vtheta \in \R^d$ with $\Vert \vtheta \Vert_2 = 1$ and a concentration parameter $\tau > 0$. They are defined by the following density w.r.t. the usual surface measure on the $(d-1)$-sphere 
    $f(\vx \;\vert\; \vtheta, \tau) = \frac{1}{Z(d, \tau)}\exp\left(\frac{\vtheta^\top\vx}{\tau}\right)$,
where $Z(d, \tau)$ is a normalization constant that is independent of $\vtheta$. %
If $(\vtheta_1, \mathellipsis, \vtheta_K)$ and $(\tau_1, \mathellipsis, \tau_K)$ are parameters of vMF distributions and $(\pi_1, \dots, \pi_K)$ is contained in the probability simplex, a vMF mixture model can be defined as usual via the density 
    $g(\vx) := \sum_{k=1}^K \pi_k f(\vx \;\vert\; \vtheta_k, \tau_k)$.

\section{Slot Attention and von Mises-Fisher Mixture Model Parameter Estimation}
\label{sec:slot_attention_and_em}
Many works~\citep{original_paper,ocl_nested,implicit_sa,slot_mixture_models} compare Slot Attention to expectation maximization~\citep{original_em_paper, bishop2006prml} (EM) in Gaussian mixture models, i.e. to soft k-means clustering. We, however, connect it with expectation maximization in a mixture model of von Mises-Fisher (vMF) distributions~\citep{banerjeen_vmf}, since Slot Attention uses a bilinear form on slots and inputs as a scoring function instead of the negative Euclidean distance. In this section, we make the parallel between Slot Attention and EM explicit by performing EM parameter estimation in a vMF mixture model and relating each step to the corresponding step in Slot Attention. We will then view the weighted mean, layer norm, and weighted sum normalization variants in the context of this analogy and compare them.

\subsection{Relating Slot Attention to EM}
We consider a case in which $N$ points $\vx_n$ are given on the unit $(d-1)$-sphere in $\R^d$. We estimate the mean directions $\vtheta_1,\mathellipsis,\vtheta_K$ of $K$ vMF components, along with the mixture coefficients $\pi_1,\mathellipsis,\pi_K$. We assume that the vMF distributions have fixed concentration, i.e., $\tau = 1$. We interpret the parameters $\vtheta_k$ to relate to slots in Slot Attention and the points $\vx_n$ to relate to the perceptual input features of the module. The concentration parameter $\tau$ can be understood as an analogue to the temperature $\sqrt{D}$ in Slot Attention.

\paragraph{E-Step} In the expectation step, soft assignments of datapoints to clusters (slots) are computed via the likelihood functions of the vMF components:
\begin{equation}
\gamma_{n, k} := \frac{\pi_k \exp(\vx_n^\top\vtheta_k)}{\sum_{k'=1}^K \pi_{k'} \exp(\vx_n^\top\vtheta_{k'})}   
\label{eq:e-step-vmf}
\end{equation}
The resulting matrix $\mGamma \in [0, 1]^{N \times K}$ corresponds to the attention matrix in Slot Attention. Equation~\eqref{eq:e-step-vmf} closely resembles the computation of the attention matrix in Slot Attention with some differences: While the inputs $\vx_n$ in Slot Attention do not necessarily lie on the unit sphere, we do remind the reader that they are layer-normalized and therefore lie on ellipsoids. Similarly, the slots are layer-normalized before the attention step. 
In contrast to equation~\ref{eq:e-step-vmf}, Slot Attention uses key and query maps instead of directly forming a dot product between $\vx_n$ and $\vtheta_k$. I.e., the dot products are formed between $\vk_n$ and $\vq_k$.

While the Slot Attention architecture does not explicitly model the mixture parameters $\pi_k$, it may encode some weighting in the layer-normalized slots. Indeed, we show in Appendix~\ref{appendix:lemma-layernorm} that the keys $\vk_n$ are contained in some $(D-1)$-dimensional affine subspace $A \subsetneq \R^D$ which may be written uniquely as $A = \bm{a} + V$ where $V$ is a $(D-1)$-dimensional linear space and $\bm{a} \in V^\perp$ is perpendicular to $V$. If $p_V: \R^D \to V$ is the orthogonal projection onto $V$ and $p_a: \R^D \to \langle \bm{a} \rangle$ is the orthogonal projection onto the span of $\bm{a}$, we may decompose any $\vx \in \R^D$ orthogonally as $\vx = p_V(\vx) + p_a(\vx)$. For any key vector $\vk_n = k(\vx_n) \in A$ we therefore have $\vk_n = \bm{a} + p_V(\vk_n)$. The attention value $\gamma_{n, k}$ in Slot Attention may now be writen as:
\begin{equation}
    \gamma_{n, k} = \frac{\exp(\vk_n^\top\vq_k)}{\sum_{k'}\exp(\vk_n^\top\vq_{k'})} = \frac{\exp(\bm{a}^{\top}p_a(\vq_k))\exp( p_V(\vk_n)^{\top}p_V(\vq_k))}{\sum_{k'}\exp(\bm{a}^{\top}p_a(\vq_{k'}))\exp( p_V(\vk_n)^{\top}p_V(\vq_{k'}))}
\end{equation}
Hence, the term $\exp(\bm{a}^\top p_a(\vq_k))$ may be interpreted as an analogue of $\pi_k$, which assigns a weight to the $k$\textsuperscript{th} slot but is independent of the input at index $n$.

\paragraph{M-Step}In the maximization step, cluster (slot) parameters are updated using the soft assignments $\mGamma$. In EM, the new mean directions and mixing coefficients are computed via:
\noindent\begin{tabularx}{\textwidth}{@{}XX@{}}
\begin{equation}
    \vtheta_k^{\text{new}} := \frac{\sum_{n=1}^N\gamma_{n, k}\vx_n}{\left\Vert \sum_{n=1}^N\gamma_{n, k}\vx_n \right\Vert_2}
    \label{eq:m_step_direction}
\end{equation}&
\begin{equation}
    \pi_k^{\text{new}} := \frac{\sum_{n=1}^N \gamma_{n, k}}{N}
    \label{eq:m_step_mixture}
\end{equation}
\end{tabularx}
In our analogy to Slot Attention, this M-step would relate to the slot-update involving the aggregated values~$\vu_k$. Hence, it may be of interest in this comparison to investigate whether the values $\vu_k$ hold information about the right-hand sides of equations~\eqref{eq:m_step_direction} and~\eqref{eq:m_step_mixture}. 

\subsection{Comparing Normalizations in EM Analogy}
In the following paragraphs, we discuss how the update normalizations discussed previously compare in the context of our EM analogy and, in particular, whether the normalized update codes $\vu_k$ hold sufficient information to recover the quantities from equations~\eqref{eq:m_step_direction} and~\eqref{eq:m_step_mixture}.

\paragraph{Weighted Mean} In the weighted mean case, the aggregated values $\vu_k$ can hold sufficient information to extract the quantities in~\eqref{eq:m_step_direction} and~\eqref{eq:m_step_mixture} if $D=\dimin$ holds. Assuming that the value map is the identity, the right-hand side of~\eqref{eq:m_step_direction} may be computed as $\vu_k / \Vert\vu_k\Vert_2$. However, it is not clear how $\pi_k$ could be computed from $\vu_k$. Indeed, we show in Proposition~\ref{prop:weighted_mean_inference_function} that there can be no general formula as for the weighted sum case that generalizes without exception across slot-counts. We provide a proof in Appendix~\ref{appendix-proof:weighted_mean_inference_function} by constructing some explicit slot settings which demonstrate that a hypothetical function $f$ can not map every update code to a corresponding unique scalar.
\begin{prop}
    \label{prop:weighted_mean_inference_function}
    Consider Slot Attention with weighted mean normalization and any fixed model parameters and fixed input data $\tilde\vx_1,\mathellipsis,\tilde\vx_N$ with $N \geq 1$. Then, there exists no function $f: \R^D \to \R$ such that it holds
    \begin{equation}
        f(\vu_k) = \frac{\sum_{n=1}^N \gamma_{n, k}}{N} \qquad \forall 1\leq k \leq K
        \label{eq:infer_column_sums_mean}
    \end{equation}
    for arbitrary $K \geq 1$, arbitrary slots $\tilde\vtheta_1,\mathellipsis,\tilde\vtheta_K$ and resulting normalized update codes $\vu_1,\mathellipsis,\vu_k$.  
\end{prop}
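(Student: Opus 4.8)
The plan is to refute the existence of $f$ by exhibiting a single point of $\R^D$ that the required identity~\eqref{eq:infer_column_sums_mean} would force $f$ to send to infinitely many different values. The guiding observation is that weighted mean normalization turns $\vu_k$ into a convex combination of the values $v(\vx_n)$ with weights $\gamma_{n, k}/\sum_{n'}\gamma_{n', k}$; consequently any common rescaling of the $k$-th column of $\mGamma$ leaves $\vu_k$ untouched while changing the column sum. I would realize exactly such rescalings by letting the slot count $K$ vary.

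Concretely, for each $K \geq 1$ I choose all slots to be identical, $\tilde\vtheta_1 = \mathellipsis = \tilde\vtheta_K$ (any fixed value). The queries then coincide, $\vq_1 = \mathellipsis = \vq_K$, so the score matrix from~\eqref{eq:score_matrix} has $M_{n, k}$ independent of $k$, and the row-wise softmax in~\eqref{eqn:2} returns the uniform assignment $\gamma_{n, k} = 1/K$ for all $n, k$. Substituting into the aggregation~\eqref{eq:unnormalized_aggregation} gives $\tilde\vu_k = \tfrac{1}{K}\sum_{n=1}^N v(\vx_n)$ with column sum $\sum_{n=1}^N \gamma_{n, k} = N/K$, so after weighted mean normalization $\vu_k = \tfrac{1}{N}\sum_{n=1}^N v(\vx_n) =: \bar{\vv}$, the mean value vector, which is a fixed point of $\R^D$ that does \emph{not} depend on $K$. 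The normalized column sum, by contrast, equals $(N/K)/N = 1/K$.

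To close the argument, suppose such an $f$ existed. Applying~\eqref{eq:infer_column_sums_mean} to the configuration above yields $f(\bar{\vv}) = 1/K$ for every $K \geq 1$; already $K = 1$ forces $f(\bar{\vv}) = 1$ while $K = 2$ forces $f(\bar{\vv}) = 1/2$, which is impossible for a single-valued function. This contradiction proves the proposition. Note that the construction respects the quantifier structure exactly: the model parameters and the inputs $\tilde\vx_1, \mathellipsis, \tilde\vx_N$ stay fixed, while only $K$ and the slots range freely, and the identity $\bar{\vv} = \tfrac{1}{N}\sum_n v(\vx_n)$ is well defined because $N \geq 1$.

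I do not expect a genuinely hard step here; the whole difficulty is conceptual, namely recognizing that weighted mean normalization discards the overall scale of each attention column and then picking the cleanest way to exploit this. Identical slots are the most economical choice, since they trivially produce identical queries and hence uniform assignments, so no appeal to whether a prescribed query is realizable through $q \circ \layernorm$ is required — this is the one subtlety I would double-check, and it is sidestepped entirely by the identical-slot construction.
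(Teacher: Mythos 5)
Your proposal is correct and follows essentially the same route as the paper's proof: the paper also compares $K=1$ against $K=2$ with identical slots (specifically $\tilde\vtheta_k = 0$), observes that both configurations yield the same update code $\frac{1}{N}\sum_n \vval_n$ while the normalized column sums are $1$ and $1/2$ respectively, and derives the same contradiction. Your version is marginally more general in allowing arbitrary identical slot values and arbitrary $K$, but the argument is the same.
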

\paragraph{Layer Normalization} While, at least in some cases, it is possible to recover the quantity in~\eqref{eq:m_step_direction} from update codes in the layer-normalization variant, these update codes still do not contain sufficient information to infer the quantity in~\eqref{eq:m_step_mixture}. Indeed, the reader may verify that the same argument we presented in the proof of Proposition~\ref{prop:weighted_mean_inference_function} also holds for the layer norm variant.
\paragraph{Weighted Sum} In the weighted sum case, we may, as for the weighted mean normalization, obtain the right-hand side of equation~\eqref{eq:m_step_direction} via $\vu_k / \Vert\vu_k\Vert_2$ if the value map is the identity. In contrast to the previously discussed normalizations, we may also recover information on the column sums $\sum_{n=1}^N \gamma_{n, k}$, which appear in equation~\eqref{eq:m_step_mixture}.  We make this rigorous in Proposition~\ref{prop:weighted_sum_inference_function} and provide a proof in Appendix~\ref{appendix-proof:weighted_sum_inference_function}, where we exploit the fact that the values $\vval_n$ lie in a lower-dimensional subspace.
\begin{prop}
    \label{prop:weighted_sum_inference_function}
    Consider Slot Attention with weighted sum normalization and fixed model parameters. Let the number of input tokens $N$ be fixed. Assume that $\dimin = D$ holds. 
    For almost all (w.r.t. Lebesgue measure) parameters of the input's layernorm module and the value map $v$, there exists a map $f: \R^D \to \R$ (which may depend on these parameters) such that
    \begin{equation}
        f(\vu_k) = \frac{\sum_{n=1}^N \gamma_{n, k}}{N} \qquad \forall 1\leq k \leq K
        \label{eq:infer_column_sums_weighted_sum}
    \end{equation}
    holds for any $K \geq 1$, any slots $\tilde\vtheta_1,\mathellipsis,\tilde\vtheta_K$, any input data $\tilde\vx_1,\mathellipsis,\tilde\vx_N$, and the resulting attention matrix $\mGamma$ and update codes $\vu_1,\mathellipsis,\vu_K$.
\end{prop}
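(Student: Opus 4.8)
The plan is to produce a single \emph{linear} functional $f$ whose coefficients depend only on the fixed parameters (the input's layernorm and the value map), and which returns the column sum $\tfrac1N\sum_n \gamma_{n,k}$ when applied to $\vu_k$. The guiding observation, which I would exploit, is that layer normalization confines every layer-normalized input $\vx_n = \layernorm(\tilde\vx_n)$ to one fixed affine hyperplane in $\R^D$, \emph{independently of the data}. Consequently the values $\vval_n = v(\vx_n)$ all lie on the image of this hyperplane under the value map, which, for an invertible value map, is again an affine hyperplane. The linear functional cutting out that hyperplane, applied to the weighted sum $\vu_k = \tfrac1C\sum_n \gamma_{n,k}\vval_n$, then reads off $\tfrac{c}{C}\sum_n\gamma_{n,k}$, where $c$ is the constant value this functional takes on the hyperplane; rescaling produces $f$.

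First I would make the hyperplane explicit. Writing the input layernorm with gain $\bm g$ and shift $\bm h$, we have $\vx_n = \diag(\bm g)\,\vy_n + \bm h$, where $\vy_n$ is the centered-and-scaled vector, so that $\ones^\top \vy_n = 0$: the centering step removes the component along $\ones$, and the subsequent division by the (scalar) standard deviation preserves orthogonality to $\ones$. Provided every entry of $\bm g$ is nonzero, this is equivalent to $\vw^\top \vx_n = c$ for all $n$, with $\vw := \diag(\bm g)^{-1}\ones$ and $c := \vw^\top \bm h$. Crucially, both $\vw$ and $c$ depend only on $\bm g,\bm h$, not on $n$ nor on the particular input data, which is exactly what lets a single $f$ serve arbitrary inputs.

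Next I would transport this constraint through the value map. Let $\bm V$ be the matrix of $v$, which is square since $\dimin = D$. For invertible $\bm V$, set $\bm r := (\bm V^{-1})^\top \vw$; then $\bm r^\top \vval_n = \bm r^\top \bm V \vx_n = \vw^\top \vx_n = c$ for every $n$, which is precisely the statement that the values lie in a lower-dimensional affine subspace. Summing against the attention weights gives $\bm r^\top \vu_k = \tfrac1C\sum_n \gamma_{n,k}\,\bm r^\top\vval_n = \tfrac{c}{C}\sum_n\gamma_{n,k}$, so defining
\begin{equation}
  f(\vu) := \frac{C}{N c}\,\bm r^\top \vu
\end{equation}
yields $f(\vu_k) = \tfrac1N\sum_n\gamma_{n,k}$ for every $k$, uniformly over $K$, over the slots, and over the inputs, since none of $\bm r$, $c$, $C$, $N$ depends on those quantities.

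Finally I would verify the genericity. The construction breaks only on the set where some entry of $\bm g$ vanishes, where $\bm V$ is singular, or where $c = \vw^\top \bm h = 0$; each of these is (after clearing denominators in the last case) the zero set of a nontrivial polynomial in the parameters and hence has Lebesgue measure zero, as does their union. I expect the main point requiring care to be exactly this last step together with the data-independence claim: one must check that the hyperplane $\{\,\vw^\top\vx = c\,\}$ is fixed by the parameters alone and contains the layernorm image for \emph{every} input, so that the same $f$ works for all $K$, all slots, and all data — rather than constructing a separate functional per configuration, which would make the statement vacuous. The measure-zero exclusion of the degenerate case $c=0$ is what legitimizes the final rescaling.
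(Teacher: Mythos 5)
Your proposal is correct and follows essentially the same route as the paper: both arguments rest on the observation that the values $\vval_n$ lie in a fixed $(D-1)$-dimensional affine subspace not containing the origin (for generic parameters), so that the linear functional cutting out that subspace recovers $\sum_n \gamma_{n,k}$ from $\vu_k$. The only difference is presentational — you construct the normal vector explicitly as $(\bm V^{-1})^\top\diag(\bm g)^{-1}\ones$ and exclude the measure-zero sets directly, whereas the paper packages the same facts into Lemmata~\ref{lemma:strict_subspace}--\ref{lemma:almost-always-translation} and writes the functional as $\vu \mapsto C\,\bm a^\top\vu/(N\Vert\bm a\Vert_2^2)$ for the foot $\bm a$ of the perpendicular from the origin, which is the same map up to the identification $\bm a = c\,\bm r/\Vert\bm r\Vert_2^2$.
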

Since the assumptions of Proposition~\ref{prop:weighted_sum_inference_function} only exclude a parameter subset of zero volume, its conclusion likely holds in practice during training.  

Hence, the weighted sum variant may also be seen as a generalization of the weighted mean normalization: the update codes from weighted sum normalization hold sufficient information such that weighted-mean-normalized update codes can be recovered from them (e.g. by the $\texttt{update}$ network if it has sufficient capacity). The reverse is not true, as demonstrated in Proposition~\ref{prop:weighted_mean_inference_function}.

\section{Methods of Normalization}
\label{sec:method}

\subsection{Weighted Sum Normalization with Fixed Scaling} 
\label{sub:weighted_sum}
As detailed in the above discussion, in contrast to the weighted mean normalization, the weighted sum normalization may preserve information on the fraction $\pi_k$ of input tokens assigned to the slot in the slot update code $\vu_k$. 
While \citep{original_paper} report worse performance of the weighted sum normalization compared to the weighted mean normalization, the value chosen for $C$ is not further discussed. 
As detailed in our experiments, we observe that the weighted sum normalization can outperform the weighted mean normalization for $C = N$, where $N$ is the number of input tokens. 
For image inputs, the number of tokens is relatively large, e.g. $N=128^2=16,384$ for feature maps of CLEVR renderings. 
We aim to avoid unreasonably large values in the update code $\vu_k$, which may lead to numerical instabilities, such as vanishing gradients.
With $C = N$, it holds that $\vu_k$ is bounded with $|u_{k,d}| \leq \max_n |v_{n,d}| \: \forall d \in \{1, ..., D\}$, which is independent of $N$. Indeed, we have the following chain of inequalities:
\begin{equation}
\begin{aligned}
    |u_{k, d}| &= \left|\frac{1}{N} \sum_{n=1}^N \gamma_{n, k} v_{n, d} \right| \leq \frac{1}{N} \sum_{n=1}^N \gamma_{n, k} \left| v_{n, d} \right| \leq \frac{1}{N}\sum_{n=1}^N \gamma_{n, k} \max_n |v_{n,d}| \leq \max_n |v_{n,d}|
\end{aligned}
\end{equation}
Here, we first use the triangle inequality, followed by the crude estimate $|v_{n, d}| \leq \max_n |v_{n,d}|$ for all $n$. Finally, we used the fact that $\sum_{n=1}^N\gamma_{n, k} \leq N$ holds, since $\mGamma$ is row-stochastic with $N$ rows. While this provides an argument for our choice of $C$ which is a suitable heuristics across tasks, we hypothesize that task-specific tuning of this hyperparameter may be beneficial.

\subsection{Weighted Sum Normalization with Batch Scaling}
\label{sub:batch_normalization}
Instead of heuristically choosing a scaling parameter $C$ in the weighted sum normalization as above, we also investigate an approach in which the scaling factor is learned via a form of batch normalization~\citep{batch_norm} during training. %
Concretely, we measure the magnitude of unnormalized update vectors in the \emph{first Slot Attention iteration of each forward pass} by computing their batch statistics. These batch statistics are used during the subsequent iterations of the forward pass to scale the update codes. While other works using batch normalization in recurrent networks do not share statistics across time~\citep{cooijmans2017rnn,laurent2016rnn}, we find that our approach greatly simplifies varying the number of iterations during inference. %
In contrast to typical implementations of batch normalization, we propose to reduce all axes during the computation of the statistics (i.e., the batch axis, the slot axis, and the layer axis). 
Reducing the slot axis is necessary to preserve slot-permutation equivariance, which is a desirable property in object-centric learning~\citep{original_paper}. 
Reducing the layer axis leads to scalar batch statistics, yielding a method that more closely aligns with the normalization approaches we have discussed so far.

Assuming that the tensor $\tilde{\bm{U}}^{(0)} \in \R^{L \times K \times D}$ holds the unnormalized update codes of the first SA iteration computed for a mini-batch of size $L$, we define the batch statistics as:
\begin{equation}
    m := \frac{1}{LKD}\sum_{l=1}^L\sum_{k=1}^K\sum_{i=1}^{D_\text{slot}} \tilde{U}^{(0)}_{l, k, i} \qquad v := \frac{1}{LKD - 1}\sum_{l=1}^L\sum_{k=1}^K\sum_{i=1}^{D_\text{slot}} (\tilde{U}^{(0)}_{l, k, i} -m)^2
\end{equation}
Note that both statistics are scalar-valued. As proposed by~\citet{batch_norm}, we also learn two parameters $\alpha, \beta \in \R$ and normalize the tensor of update codes in iteration $j$ via:
\begin{equation}
    \bm{U}^{(j)} := \alpha\frac{\tilde{\bm{U}}^{(j)} - m}{\sqrt{v + \epsilon}} + \beta
    \label{eq:batch_normalization}
\end{equation}
where $\epsilon>0$ is a small constant. We stress that the values $m$ and $v$ are computed for each mini-batch in the first Slot Attention iteration and are therefore independent of $j$. Moreover, gradients flow through $m$ and $v$. We cache an exponential moving average of the batch statistics during training and use it during inference. Hence, during inference, the normalization in equation~\eqref{eq:batch_normalization} is simply an affine transformation with fixed weights, thereby closely resembling the weighted sum normalization presented in the previous subsection. While the weighted sum normalization is linear and not affine, we note that this distinction does not impact the capacity of the model. Indeed, the vectors $\vu_k$ are also affinely transformed within the $\texttt{update}$ network, therefore making any previous affine or linear transformation redundant from a capacity perspective. Notwithstanding, the normalizations presented here will significantly alter the trainig trajectory of the models. Batch normalization, in particular, has previously been shown to remedy the problem of saturating activations and vanishing gradients~\citep{batch_norm,pascanu2013vanishing}, which may arise from improper normalization~\citep{glorot2010normalization}.
We provide pseudocode for the two proposed normalization variants in Appendix~\ref{appendix:pseudocode}.

\section{Experiments}
\label{sec:experiments}

We investigate the proposed normalizations on unsupervised object discovery tasks. To this end, we train autoencoders on the CLEVR~\citep{clevr_dataset} and MOVi-C~\citep{kubric} datasets, utilizing autoencoder architectures that have been described in~\citep{original_paper} and~\citep{dinosaur}, respectively. Additional results for a property prediction task can be found in Appendix~\ref{appendix:property-prediction}. We provide visualizations of scene segmentations in Appendix~\ref{appendix:visualizations}. We will give a brief overview on our experimental setup in the following and refer to the supplementary material for more details\footnote{Code is available at \url{https://github.com/EmbodiedVision/slot_attention_normalization}.}.

\paragraph{Model Variants} We refer to the standard normalization (weighted mean) as the \emph{baseline} and to the LayerNorm-based ablation from~\citep{original_paper} as the \emph{layer} normalization. We term the method detailed in Sec.~\ref{sub:weighted_sum} the \emph{weighted sum} normalization, and the method from Sec.~\ref{sub:batch_normalization} the \emph{batch} normalization. In some experiments, we will train models on filtered training sets (CLEVR6 and MOVi-C6) containing only a limited number of objects. For clarity, we annotate each model variant with a tuple $(O, K)$, where $O$ denotes the maximum number of objects seen in the training set and $K$ denotes the number of slot latents used during training.

\paragraph{CLEVR Dataset} 
We use an extended version of the CLEVR dataset that is provided in the Multi-object Datasets repository~\citep{multi_object_datasets}. 
It consists of 100,000 2D renderings of 3D scenes depicting up to 10  objects whose shapes are geometric primitives. 
Each scene is annotated with a ground truth segmentation, which we use for evaluation.
Following~\citet{original_paper}, we use 70,000 images for training and further adopt the approach of~\citep{original_paper,iodine,monet} by cropping the images to highlight objects in the center. 
In contrast to~\citep{original_paper}, we also augment the data during training via random horizontal flips. As in~\citep{original_paper}, we also consider a subset of the CLEVR dataset, only consisting of images containing at most 6 objects. 
We refer to this dataset as CLEVR6 and will denote the original dataset by CLEVR10.

\paragraph{MOVi-C Dataset} 
Compared to CLEVR, MOVi-C represents a significant step-up in perceptual complexity. 
It contains 10,986 video sequences, each consisting of 24 frames. 
We use 250 of these video sequences for validation and hold out 999 sequences for testing. 
Each clip shows 3 to 10 highly textured 3D-scanned objects from the Google Scanned Objects repository~\citep{gso_dataset} flying into view and colliding. 
In our experiments, we only consider single RGB frames from the dataset and discard any temporal relation between them. 
Once again, we introduce a filtered dataset, which we denote by MOVi-C6 and which consists of frames of clips that contain at most 6 objects. 
For sake of clarity, we refer to the original dataset as MOVi-C10. 

\paragraph{MOVi-D Dataset} The MOVi-D dataset consists of scenes that are visually similar to those from the MOVi-C dataset. However, the scenes contain up to 23 (10 to 20 static, 1 to 3 moving) objects, thereby presenting a greater challenge to object-centric method. Structurally, the dataset resembles MOVi-C, consisting of 11,000 video sequences, each made up of 24 frames. As before, we discard any temporal relationship between frames. In our experiments we will not train on MOVi-D, but instead investigate zero-shot transfer performance of models that were trained on MOVi-C.

\paragraph{CNN Autoencoder} We adopt the convolutional neural network (CNN) based architecture proposed in~\citep{original_paper} to train object-centric autoencoders on the CLEVR dataset. 
A convolutional network transforms input images into feature maps, which are enriched by positional embeddings and spatially flattened. 
The resulting sets of tokens are processed by the Slot Attention module to obtain object-centric latent representations. 
A spatial broadcast decoder~\citep{spatial_broadcast_decoder} decodes each slot latent separately into an image and an unnormalized alpha mask. 
The alpha masks are normalized across the slot axis via a softmax operation and subsequently used to linearly combine the reconstructed images, thereby producing a reconstruction of the input. 
As in~\citep{original_paper}, we perform 3 Slot Attention iterations during training, and 5 iterations during evaluation.
We further follow~\citep{original_paper} in that we obtain segmentations from trained autoencoders by assigning each pixel to the slot for which the corresponding entry in the alpha mask attains a maximum value. 

\paragraph{Dinosaur Autoencoder} 
To obtain object-centric behavior on the substantially more complex MOVi-C dataset, we adopt the approach of Dinosaur~\citep{dinosaur}. 
Instead of directly operating on RGB frames, the autoencoders are trained on image features that are extracted via a pre-trained and fixed vision transformer (ViT)~\citep{dino}. 
In spirit, the autoencoder resembles the previously discussed architecture: A small encoder, in the form of a two-layer perceptron, processes the ViT features, which are then transferred into a latent representation by the Slot Attention module. 
Each latent is decoded individually into a reconstruction of the image features and an unnormalized alpha mask. 
An overall reconstruction of the ViT features is formed by linearly combining the individual reconstructions via the normalized alpha masks.
We use the MLP-based decoder that is described in~\citep{dinosaur}. Crucially, the autoencoder exclusively operates on ViT features, neither receiving RGB frames as input, nor producing them as output. Hence, the autoencoder's reconstruction loss is also measured on ViT features, providing a training signal that is more akin to perceptual similarity than similarity in RGB space.  %
As in the experiments on the CLEVR dataset, we extract segmentations from the alpha masks. 
Since the alpha masks (and, correspondingly, the ViT feature maps) are of a lower resolution than the RGB frames of the MOVi-C dataset, we adopt the approach of~\citep{dinosaur} and bi-linearly upscale the alpha masks before computing segmentations.

\paragraph{Evaluation} Following related work~\citep{original_paper,dinosaur,kipf2022conditional,iodine}, we primarily judge model performance by the quality of foreground segmentations, as measured by the foreground adjusted Rand index~\citep{ari,adjusting_ari} (F-ARI). Additionally, we provide figures regarding the overall segmentation performance when including the background (ARI).
All models are evaluated on 1,280 scenes from the respective test sets. As reconstruction losses are rarely discussed in related work, we defer the investigation of this metric to Appendix~\ref{appendix:reconstruction-loss}.

\subsection{Object Discovery on CLEVR}
\label{sub:experiments_clevr}
In this subsection, we investigate our proposed normalization approaches on an object discovery task on the CLEVR dataset. 
In a first set of experiments, we follow the exact training procedure detailed in~\citep{original_paper} and illustrate how the different normalization methods behave as the number of slot latents $K$ is changed during inference. 
The effect of choosing large numbers of slot latents during training is studied in a second set of experiments.
Throughout these experiments, we additionally scrutinize the impact of the object count on model performance.

\paragraph{Training With 7 Slots} In this first set of experiments, we follow~\citep{original_paper} as closely as possible and train the previously described CNN-based architecture on the CLEVR6 dataset with 7 slots. 
We compare the baseline and layer normalizations proposed in~\citep{original_paper} to the methods discussed in Section~\ref{sec:method}. For each variant, we perform 5 training runs with different seeds. The trained models are evaluated on the CLEVR6 and CLEVR10 test sets.

The baseline and layer norm variants lead to object-centric behavior for all 5 seeds. For the weighted sum normalization and the batch norm variant, however, we encounter two runs each in which the autoencoders decompose the input spatially instead of object-wise. Following~\citep{original_paper}, we omit these runs in our analysis.

\begin{figure}[tb]            %
    \centering
    \begin{subfigure}[t]{0.3\textwidth}
        \includegraphics[width=\textwidth]{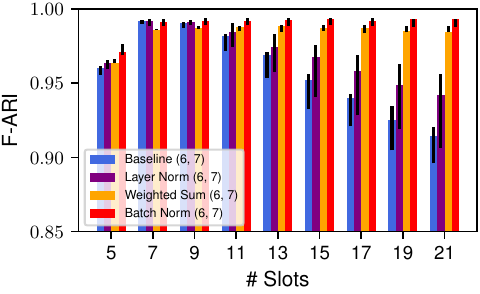}
        \subcaption{F-ARI ($\uparrow$) on CLEVR6, number of slots varies during inference.}
        \label{subfig:plain_clevr_6-7_6}
    \end{subfigure} \hfill
    \begin{subfigure}[t]{0.3\textwidth}
        \includegraphics[width=\textwidth]{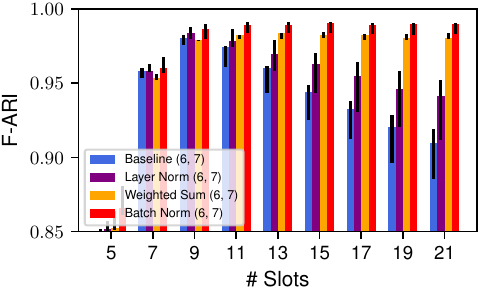}
        \subcaption{F-ARI ($\uparrow$) on CLEVR10, number of slots varies during inference.}
        \label{subfig:plain_clevr_6-7_10}
    \end{subfigure} \hfill
    \begin{subfigure}[t]{.3\textwidth}
            \centering
            \includegraphics[width=\textwidth]{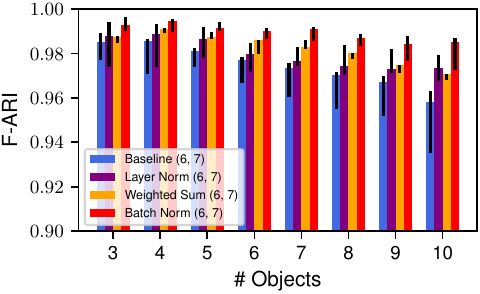}
            \caption{F-ARI ($\uparrow$) on CLEVR, number of objects varies during inference. Evaluated with 11 slots.}
            \label{subfig:plain_clevr_performance_vs_objects}
    \end{subfigure} \\
    \begin{subfigure}[t]{0.3\textwidth}
        \includegraphics[width=\textwidth]{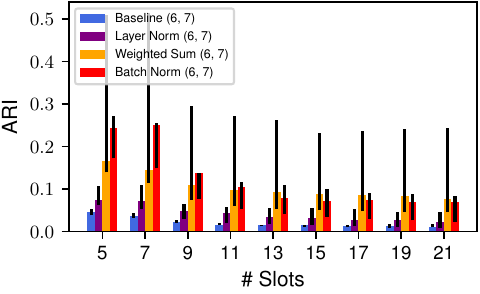}
        \subcaption{ARI ($\uparrow$) on CLEVR6}
        \label{subfig:plain_clevr_6-7_6_overall}
    \end{subfigure} \hfill
    \begin{subfigure}[t]{0.3\textwidth}
        \includegraphics[width=\textwidth]{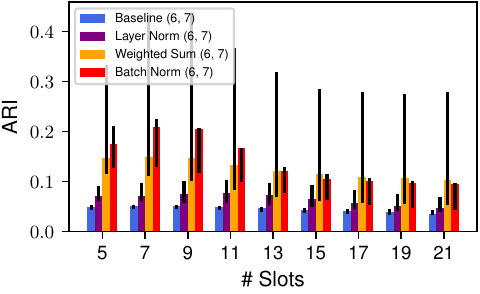}
        \subcaption{ARI ($\uparrow$) on CLEVR10}
        \label{subfig:plain_clevr_6-7_10_overall}
    \end{subfigure} \hfill
    \begin{subfigure}[t]{.3\textwidth}
            \centering
            \includegraphics[width=\textwidth]{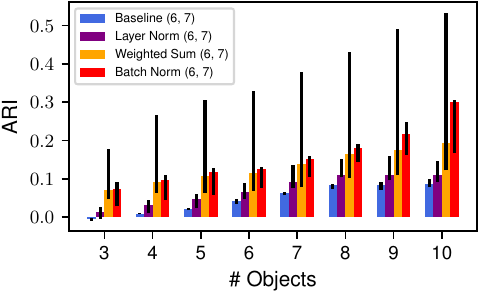}
            \caption{ARI ($\uparrow$) on CLEVR}
            \label{subfig:plain_clevr_performance_vs_objects_overall}
    \end{subfigure} 
    \caption{Dependence of performance on slot and object count. Models are trained on CLEVR6 with 7 slots. Note the non-zero y-intercept.}
    \label{fig:plain_clevr_6-7}
\end{figure}

In Subfigures~\ref{subfig:plain_clevr_6-7_6} and~\ref{subfig:plain_clevr_6-7_10}, we illustrate how the foreground segmentation performance changes as we vary the number of slots during evaluation. We note that in the baseline and layer norm variants, segmentation performance deteriorates 
when they are presented with more than nine slots, while our proposed normalizations appear to generalize well to these changes. In particular, we observe that both of our proposed normalizations outperform the baseline when the autoencoders are evaluated with 11 slots, as is done in~\citep{original_paper}. We show qualitative results of the different model variants at a high slot count in Figure~\ref{fig:qualitative-clevr} and refer to Appendix~\ref{appendix:visualizations} for more visualizations.

\begin{figure}[tb]
    \centering
    \begin{tabular}{c|cccc}
         Input & Baseline & Layer Norm & Weighted Sum & Batch Norm \\
         \hline
         \includegraphics[width=1in]{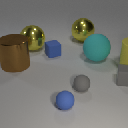} & \includegraphics[width=1in]{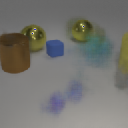}  &
         \includegraphics[width=1in]{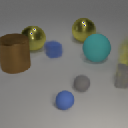} & \includegraphics[width=1in]{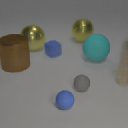} & \includegraphics[width=1in]{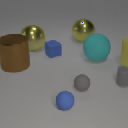}\\
         & \includegraphics[width=1in]{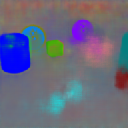} &  \includegraphics[width=1in]{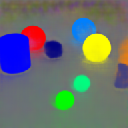} & \includegraphics[width=1in]{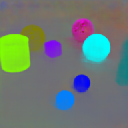} & \includegraphics[width=1in]{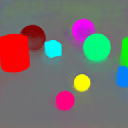} \\
         \hline
    \end{tabular}
    \caption{Qualitative results on a CLEVR10 images, showing reconstructions and (soft) segmentations. The models are trained on CLEVR6 with 7 slots and evaluated with 21 slots.}
    \label{fig:qualitative-clevr}
\end{figure}

We study how performance depends on the number of objects in  Subfigure~\ref{subfig:plain_clevr_performance_vs_objects}. Here, we evaluate each model variant with 11 slot latents on the CLEVR10 test set and plot average foreground segmentation performance dependent on the object count. 
Overall, we observe that our proposed normalizations outperform the baseline, independently of the object count. Also note that, across all model variants, segmentation performance trends downwards as the number of objects in the scene increases. 

Subfigures~\ref{subfig:plain_clevr_6-7_6_overall}-\ref{subfig:plain_clevr_performance_vs_objects_overall} illustrate the behavior of the overall segmentation performance when background pixels are taken into consideration. It appears that our proposed methods outperform the other two variants w.r.t. this metric, although the variablity across runs is large.  %

\paragraph{Excess Slots During Training} While we have so far only discussed experiments in which we increase the number of slot latents during inference, we will now outline an experiment in which the Slot Attention module is also provided with excess slots during training: Concretely, we train the autoencoders on the CLEVR6 dataset with 11 slot latents. We annotate the resulting variants with the tuple (6, 11) to underline that they were trained with 11 slots on scenes consisting of at most 6 objects. %
To limit computational expenses, we perform only three runs per model variant. We observe object-centric behavior in all runs for the baseline, the layer norm variant, and the weighted sum variant. For the batch normalization, we encounter one run in which the scenes are deconstructed spatially in vertical stripes. As before, we exclude this run from our analysis and are therefore left with only two runs for this variant. Considering the breadth of our other experiments and the small variability observed in this experiment, we deem this loss of information acceptable. 

In this setting, the studied variants seem to perform more comparably than before w.r.t. foreground segmentation performance (Subfigures~\ref{subfig:clevr_6-11_6}-\ref{subfig:clevr_excess_slots_number_objects}). Notwithstanding, we again note that the performance of the baseline and layer norm variants starts to suffer as we add additional slot latents during inference. In contrast, the performance of the two proposed variants remains more stable. In general, the foreground segmentation performance is lower than during training with few slots (Subfigures~\ref{subfig:plain_clevr_6-7_6}-\ref{subfig:plain_clevr_performance_vs_objects}). All models trained with our proposed methods learn to segment the background into a single slot, leading to high overall segmentation performance (Subfigures~\ref{subfig:clevr_6-11_6_overall}-\ref{subfig:clevr_6-11_objects_overall}). One model using the baseline normalization exhibits this behavior, while none of the models using layer normalization do so.

\begin{figure}[tb]            %
    \centering
    \begin{subfigure}[t]{0.3\textwidth}
        \includegraphics[width=\textwidth]{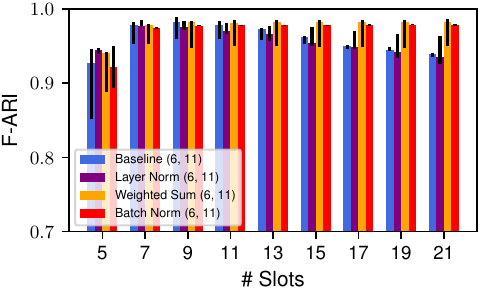}
        \subcaption{F-ARI ($\uparrow$) on CLEVR6, number of slots varies during inference. }
        \label{subfig:clevr_6-11_6}
    \end{subfigure} \hfill
    \begin{subfigure}[t]{0.3\textwidth}
        \includegraphics[width=\textwidth]{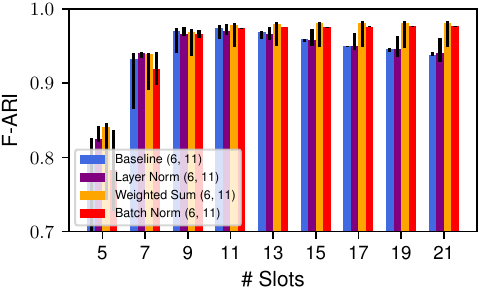}
        \subcaption{F-ARI ($\uparrow$) on CLEVR10, number of slots varies during inference. }
        \label{subfig:clevr_6-11_10}
    \end{subfigure} \hfill
    \begin{subfigure}[t]{.32\textwidth}
            \includegraphics[width=\textwidth]{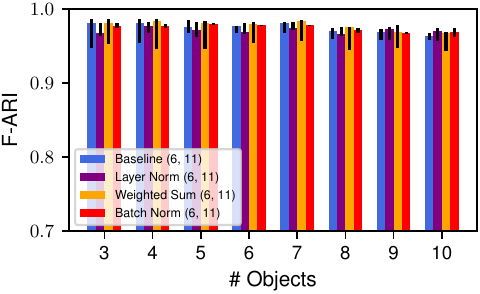}
            \caption{F-ARI ($\uparrow$) on CLEVR, object count varies during inference. Evaluated with 11 slots.}
            \label{subfig:clevr_excess_slots_number_objects}
    \end{subfigure} \\
    \begin{subfigure}[t]{0.3\textwidth}
        \includegraphics[width=\textwidth]{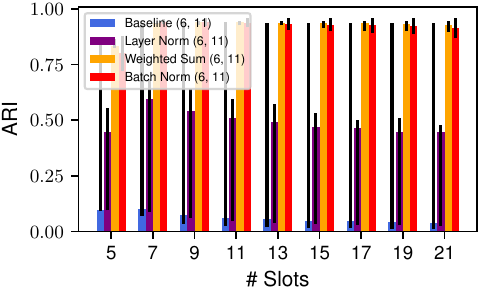}
        \subcaption{ARI ($\uparrow$) on CLEVR6}
        \label{subfig:clevr_6-11_6_overall}
    \end{subfigure} \hfill
    \begin{subfigure}[t]{0.3\textwidth}
        \includegraphics[width=\textwidth]{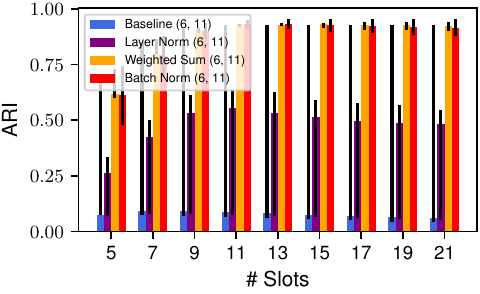}
        \subcaption{ARI ($\uparrow$) on CLEVR10}
        \label{subfig:clevr_6-11_10_overall}
    \end{subfigure} \hfill
    \begin{subfigure}[t]{.32\textwidth}
            \includegraphics[width=\textwidth]{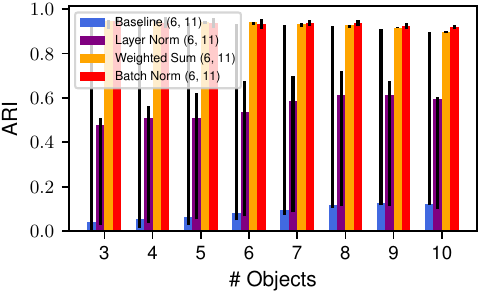}
            \caption{ARI ($\uparrow$) on CLEVR}
            \label{subfig:clevr_6-11_objects_overall}
    \end{subfigure}
    \caption{Dependence of segmentation performance on slot and object count. Models are trained on CLEVR6 with 11 slots.}
    \label{fig:clevr_6-11}
\end{figure}

\subsection{Object Discovery on MOVi-C}
To further support the validity of our proposed normalizations, we run additional experiments, using the Dinosaur~\citep{dinosaur} framework. As previously discussed, we train the MLP-based architecture described in~\citep{dinosaur} on the MOVi-C dataset. While it still is a synthetic dataset, this represents a significant step-up in complexity compared to CLEVR, approaching the complexity of real-world scenes.

\paragraph{Training on MOVi-C10} In our first set of experiments, we closely follow the setup detailed in~\citep{dinosaur} and train the autoencoders on the full MOVi-C10 dataset, using 11 slots. As in the previous subsection, we investigate how the foreground segmentation performance develops as we vary the number of slot latents during inference. For each model variant, we perform 5 training runs with different seeds. For sake of consistency with the other experiments, we evaluate the trained models both on the MOVi-C10 test set and on the filtered MOVi-C6 dataset.

In Subfigures~\ref{subfig:dinosaur_10-11_6} and~\ref{subfig:dinosaur_10-11_10}, we observe that our proposed normalizations generally lead to improved foreground segmentation performance compared to the baseline and layer normalization across all slot counts. 
In particular, both proposed normalizations outperform the baseline and layer normalization variant with 11 slots on the MOVi-C10 dataset, as can be observed in Subfigures~\ref{subfig:dinosaur_10-11_6} and~\ref{subfig:dinosaur_10-11_10}. As in our previous experiments, we note that foreground segmentation performance starts to suffer as the baseline variant is provided with excess slots during inference. While our proposed methods exhibit a similar behavior in these experiments, we note that the deterioration progresses at a slower rate. Additionally, we observe in Subfigure~\ref{subfig:dinosaur10-11_objects} that performance is improved across smaller object counts.
\begin{figure}[tb]            %
    \centering
    \begin{subfigure}[t]{0.3\textwidth}
        \includegraphics[width=\textwidth]{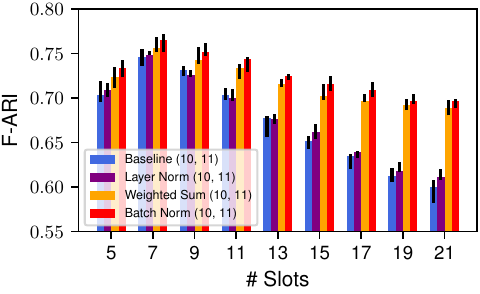}
        \subcaption{F-ARI ($\uparrow$) on MOVi-C6, number of slots varies during inference. }
        \label{subfig:dinosaur_10-11_6}
    \end{subfigure} \hfill
    \begin{subfigure}[t]{0.3\textwidth}
        \includegraphics[width=\textwidth]{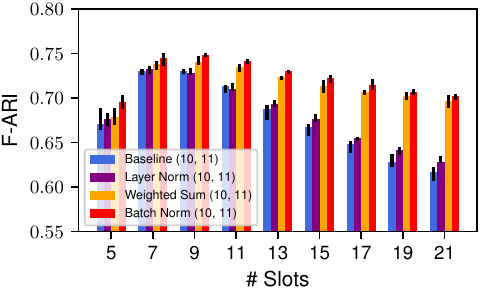}
        \subcaption{F-ARI ($\uparrow$) on MOVi-C10, number of slots varies during inference. }
        \label{subfig:dinosaur_10-11_10}
    \end{subfigure}\hfill
    \begin{subfigure}[t]{0.3\textwidth}
        \includegraphics[width=\textwidth]{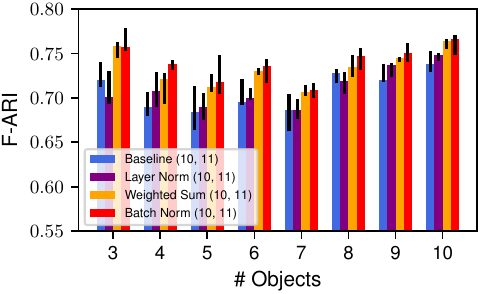}
        \subcaption{F-ARI ($\uparrow$) on MOVi-C, object count varies during inference. Evaluated with 11 slots.}
        \label{subfig:dinosaur10-11_objects}
    \end{subfigure} \\
    \begin{subfigure}[t]{0.3\textwidth}
        \includegraphics[width=\textwidth]{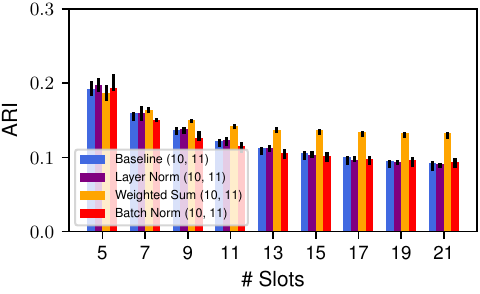}
        \subcaption{ARI ($\uparrow$) on MOVi-C6}
        \label{subfig:dinosaur_10-11_6_overall}
    \end{subfigure} \hfill
    \begin{subfigure}[t]{0.3\textwidth}
        \includegraphics[width=\textwidth]{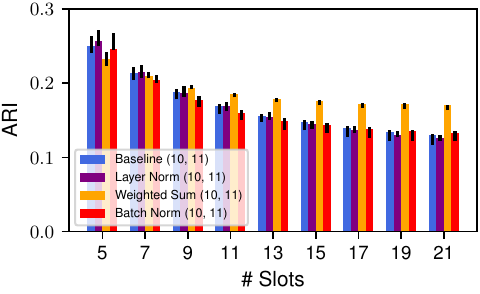}
        \subcaption{ARI ($\uparrow$) on MOVi-C10}
         \label{subfig:dinosaur_10-11_10_overall}
    \end{subfigure}\hfill
    \begin{subfigure}[t]{0.3\textwidth}
        \includegraphics[width=\textwidth]{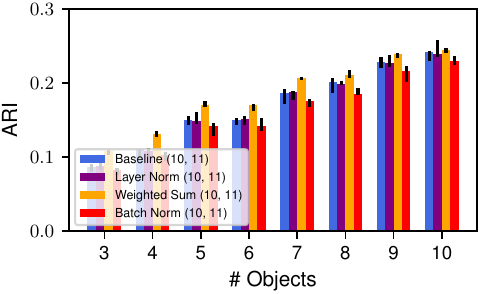}
        \subcaption{ARI ($\uparrow$) on MOVi-C}
        \label{subfig:dinosaur10-11_objects_overall}
    \end{subfigure}
    \caption{Dependence of segmentation performance on slot and object count. Models are trained on MOVi-C10 with 11 slots.}
    \label{fig:dinosaur_10-11}
\end{figure}

The behavior of overall segmentation performance is shown in Subfigures~\ref{subfig:dinosaur_10-11_6_overall}-\ref{subfig:dinosaur10-11_objects_overall}. In line with our previous observations, we note that performance suffers for all variants when excess slots are present during inference. While baseline, layer normalization and batch normalization yield comparable results w.r.t. this metric, the weighted sum variant performs noticably better.

\paragraph{Training on MOVi-C6} While the authors of~\citep{dinosaur} trained autoencoders exclusively on the MOVi-C10 dataset, we will also investigate an approach that resembles the one described in~\citep{original_paper}, and which we adopted in Subsection~\ref{sub:experiments_clevr}. Namely, we train models on the filtered MOVi-C6 dataset with 7 slots and subsequently evaluate them on both MOVi-C6 and MOVi-C10. This approach may be particularly interesting to practitioners, as reducing the number of slot latents during training can serve to greatly reduce computational effort. To limit computational expenses, we again only perform three runs per model variant. 

\begin{figure}[tb]
    \centering
    \begin{subfigure}[t]{0.3\textwidth}
        \includegraphics[width=\textwidth,trim={0 0.1cm 0 0},clip]{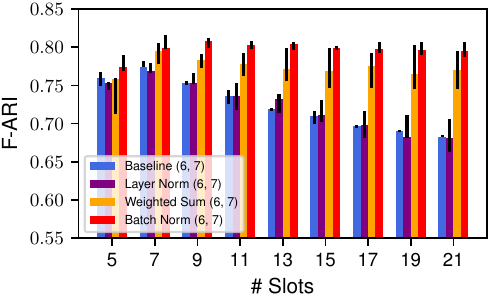}
        \subcaption{F-ARI ($\uparrow$) on MOVi-C6, number of slots varies during inference. }
        \label{subfig:dinosaur_6-7_movic6}
    \end{subfigure} \hfill
    \begin{subfigure}[t]{0.3\textwidth}
        \includegraphics[width=\textwidth,trim={0 0.1cm 0 0},clip]{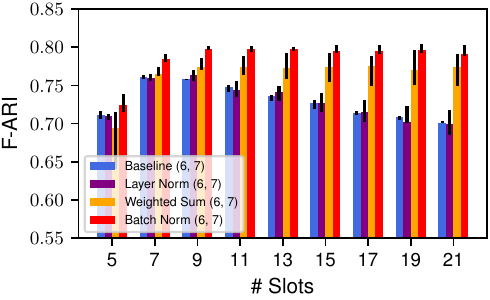}
        \subcaption{F-ARI ($\uparrow$) on MOVi-C10, number of slots varies during inference. }
        \label{subfig:dinosaur_6-7_movic10}
    \end{subfigure} \hfill
    \begin{subfigure}[t]{0.3\textwidth}
        \includegraphics[width=\textwidth,trim={0 0.1cm 0 0},clip]{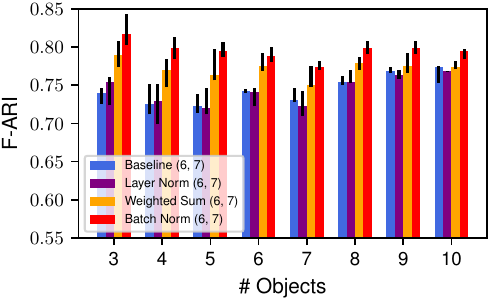}
        \caption{F-ARI ($\uparrow$) on MOVi-C, object count varies during evaluation. Scenes are evaluated with 11 slots.}
        \label{subfig:dinosaur_6-7_objects}
    \end{subfigure} \\
    \begin{subfigure}[t]{0.3\textwidth}
        \includegraphics[width=\textwidth,trim={0 0.1cm 0 0},clip]{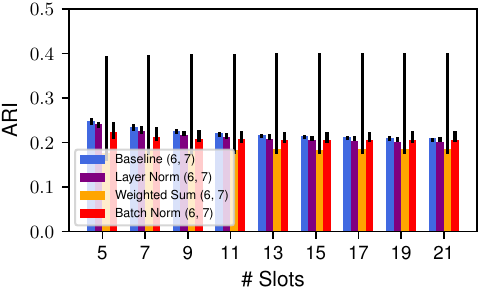}
        \subcaption{ARI ($\uparrow$) on MOVi-C6}
        \label{subfig:dinosaur_6-7_movic6_overall}
    \end{subfigure} \hfill
    \begin{subfigure}[t]{0.3\textwidth}
        \includegraphics[width=\textwidth,trim={0 0.1cm 0 0},clip]{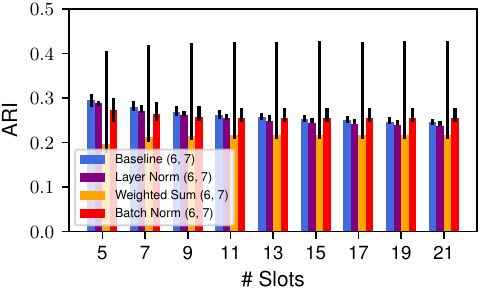}
        \subcaption{ARI ($\uparrow$) on MOVi-C10}
        \label{subfig:dinosaur_6-7_movic10_overall}
    \end{subfigure} \hfill
    \begin{subfigure}[t]{0.3\textwidth}
        \includegraphics[width=\textwidth,trim={0 0.1cm 0 0},clip]{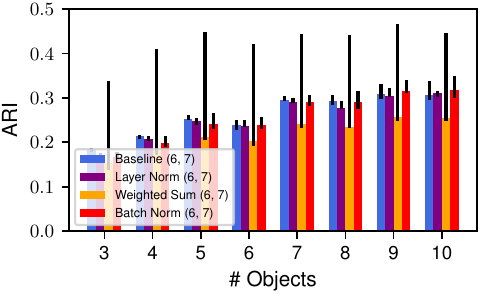}
        \caption{ARI ($\uparrow$) on MOVi-C}
        \label{subfig:dinosaur_6-7_objects_overall}
    \end{subfigure}
    \caption{Dependence of segmentation performance on slot and object count. Models are trained on MOVi-C6 with 7 slots.}
    \label{fig:dinosaur_6-7}
\end{figure}
We illustrate in Subfigures~\ref{subfig:dinosaur_6-7_movic6}-\ref{subfig:dinosaur_6-7_objects} the behavior of the foreground segmentation performance.
As in our previous experiments, we observe that both of our proposed normalizations outperform the baseline when evaluated on the MOVi-C10 test set with 11 slots. Interestingly, it can be noted that performance also improves over the models trained on the MOVi-C10 dataset with 11 slots (Figure~\ref{fig:dinosaur_10-11}). Again, we point out that excess slot latents during inference lead to a substantial deterioration of foreground segmentation performance in the baseline and layer norm models. 

In Subfigures~\ref{subfig:dinosaur_6-7_movic6_overall}-\ref{subfig:dinosaur_6-7_objects_overall}, we plot the behavior of overall segmentation quality. Compared to the previous experiment, varying slot count has a lesser impact on overall segmentation performance for all methods. While the differences seem unsubstantial, the baseline appears to perform best w.r.t. this metric.

\paragraph{Excess Slots During Training} In line with the experiments on the CLEVR dataset, we turn to a set of experiments that investigates the impact of excess slots during training. Similar to the corresponding setup in Subsection~\ref{sub:experiments_clevr}, we train the models on the filtered MOVi-C6 dataset, but provide them with 11 slots during training.
We generally observe that both the weighted sum normalization and the batch normalization lead to improved foreground segmentations compared to the baseline and layer normalization, especially at higher slot count, as can be concluded from Subfigures~\ref{subfig:dinosaur_6-11_movic6} and~\ref{subfig:dinosaur_6-11_movic10}.
\begin{figure}[tb]    
    \centering
    \begin{subfigure}[t]{0.3\textwidth}
        \includegraphics[width=\textwidth]{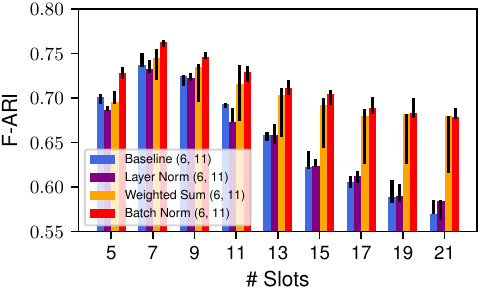}
        \subcaption{F-ARI ($\uparrow$) on MOVi-C6, number of slots varies during inference. }
        \label{subfig:dinosaur_6-11_movic6}
    \end{subfigure} \hfill
    \begin{subfigure}[t]{0.3\textwidth}
        \includegraphics[width=\textwidth]{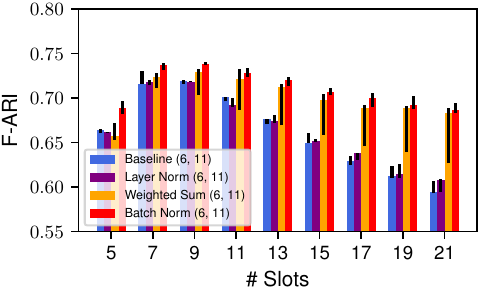}
        \subcaption{F-ARI ($\uparrow$) on MOVi-C10, number of slots varies during inference. }
        \label{subfig:dinosaur_6-11_movic10}
    \end{subfigure}\hfill
    \begin{subfigure}[t]{0.3\textwidth}
        \includegraphics[width=\textwidth]{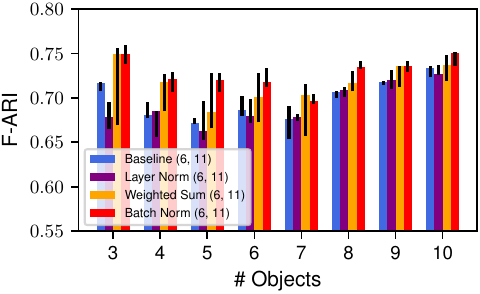}
        \caption{F-ARI ($\uparrow$) on MOVi-C set, object count varies during inferences. Evaluated with 11 slots.}
        \label{subfig:dinosaur_excess_slots_objects}
    \end{subfigure} \\
    \begin{subfigure}[t]{0.3\textwidth}
        \includegraphics[width=\textwidth]{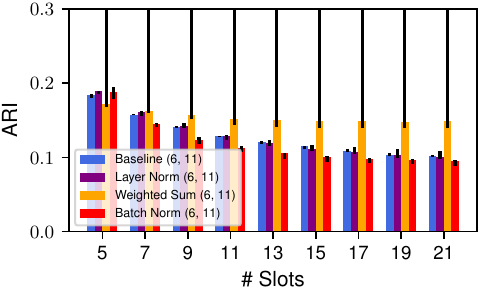}
        \subcaption{ARI ($\uparrow$) on MOVi-C6}
        \label{subfig:dinosaur_6-11_movic6_overall}
    \end{subfigure} \hfill
    \begin{subfigure}[t]{0.3\textwidth}
        \includegraphics[width=\textwidth]{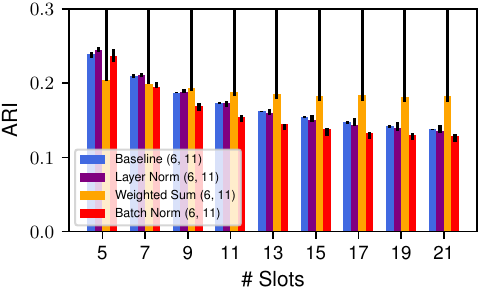}
        \subcaption{ARI ($\uparrow$) on MOVi-C10}
        \label{subfig:dinosaur_6-11_movic10_overall}
    \end{subfigure}\hfill
    \begin{subfigure}[t]{0.3\textwidth}
        \includegraphics[width=\textwidth]{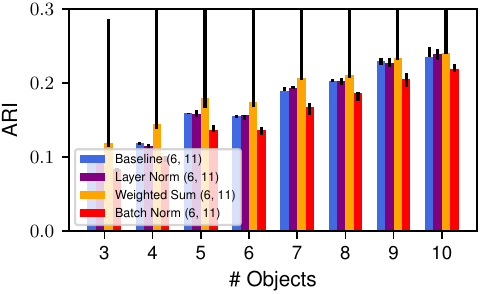}
        \caption{ARI ($\uparrow$) on MOVi-C}
        \label{subfig:dinosaur_6-11_objects_overall}
    \end{subfigure}
    \caption{Dependence of segmentation performance on slot and object count. Models are trained on MOVi-C6 with 11 slots.}
    \label{fig:dinosaur_excess_slots_more_slots}
\end{figure}
Subfigure~\ref{subfig:dinosaur_excess_slots_objects} additionally demonstrates once again that our proposed normalizations appear to perform at least as well as the baseline across all object counts. 
In Subfigures~\ref{subfig:dinosaur_6-11_movic6_overall}-\ref{subfig:dinosaur_6-11_objects_overall} we find that increased slot count during inference harms overall segmentation quality. The weighted sum variant performs best, although the variablity in ARI appears large.

\paragraph{Evaluation on MOVi-D} We now investigate how the previously described models (which were trained on MOVi-C) transfer to the MOVi-D dataset. We recall that scenes of the MOVi-D dataset may contain up to 23 objects. Hence, the MOVi-D dataset allows us to study how the trained models behave when slot- and object-count are increased significantly during inference. We evaluate the models with 24 slots.
In Table~\ref{tab:movid_performance}, we show the MOVi-D zero-shot performance of all 12 model variants we trained on MOVi-C. The batch norm variant performs best w.r.t. our main metric, the F-ARI. This holds true both when comparing all 12 variants to each other, and when considering the subsets of models obtained by only considering variants with any fixed annotation $(O, K)$.
Compared to their batch normalization counterparts, the weighted sum models perform similarly w.r.t. foreground ARI, performing only slightly worse. Both the weighted sum normalization and the batch normalization produce markedly better foreground segmentations than the baseline and layer norm variants. 
With respect to overall segmentation quality (ARI), none of the normalization variants consistently outperform the others, which is in line with our observations on MOVi-C. 

For any fixed normalization method, the $(6, 7)$ variant performs better w.r.t. F-ARI than the $(10, 11)$ and $(6, 11)$  variants. This illustrates that, firstly, training on filtered training sets with few objects can improve performance during inference on scenes with many objects; secondly, avoiding excess slots during training appears to be important, which is an observation that has been made in~\citep{original_paper} and which we also note when comparing Figures~\ref{fig:plain_clevr_6-7} and~\ref{fig:clevr_6-11}. This effect underlines the importance of strong slot-count generalization capabilities.  We posit that training at low object- and slot-counts reduces the number of "unoccupied" slots during training, thereby tightening the representational slot-bottleneck, which has been postulated to encourage object-centricness~\citep{original_paper,sa_bottleneck}.
\begin{table}
    \centering
    \begin{tabular}{l|ccc}
\toprule
Variant & F-ARI ($\uparrow$) & $\ell^2$ loss ($\downarrow$) & ARI ($\uparrow$) \\
\midrule
Baseline (10, 11) & 0.647 \tiny \textpm 0.015 & \textbf{\underline{2.143}} \tiny \textpm 0.004  & 0.172 \tiny \textpm 0.011 \\
Layer Norm (10, 11) & 0.660 \tiny \textpm 0.012 & 2.146 \tiny \textpm 0.002 & 0.171 \tiny \textpm 0.004 \\
Weighted Sum (10, 11) & 0.722 \tiny \textpm 0.005 & 2.223 \tiny \textpm 0.009 & \underline{0.204} \tiny \textpm 0.004 \\
Batch Norm (10, 11) &  \underline{0.725} \tiny \textpm 0.006 & 2.149 \tiny \textpm 0.004 & 0.182  \tiny \textpm 0.011 \\
\midrule
Baseline (6, 11) & 0.640 \tiny \textpm 0.009 & 2.220 \tiny \textpm 0.001 & 0.176 \tiny \textpm 0.003 \\
Layer Norm (6, 11) & 0.639 \tiny \textpm 0.011 & 2.220 \tiny \textpm 0.004 & 0.176 \tiny \textpm 0.005 \\
Weighted Sum (6, 11) & 0.709 \tiny \textpm 0.044 & 2.309  \tiny \textpm 0.014 & \underline{0.205} \tiny \textpm 0.147 \\
Batch Norm (6, 11) &  \underline{0.711} \tiny \textpm 0.005 & \underline{2.219} \tiny \textpm  0.006 & 0.175 \tiny \textpm 0.007 \\
\midrule
Baseline (6, 7) & 0.741 \tiny \textpm 0.005 & 2.370 \tiny \textpm 0.009 & 0.253 \tiny \textpm 0.018 \\
Layer Norm (6, 7)& 0.742 \tiny \textpm 0.019 & \underline{2.365} \tiny \textpm 0.007 & 0.253 \tiny \textpm 0.007 \\
Weighted Sum (6, 7)& 0.797 \tiny \textpm 0.023 & 2.475 \tiny \textpm 0.038&  0.242 \tiny \textpm 0.156\\
Batch Norm (6, 7) & \textbf{\underline{0.809}} \tiny \textpm 0.005 & 2.374 \tiny \textpm 0.002 & \textbf{\underline{0.297}} \tiny \textpm 0.030 \\
\bottomrule
\end{tabular}

    \caption{Zero-shot performance on MOVi-D of models trained on MOVi-C. Evaluated with 24 slots. We show the median \textpm~maximum deviation across multiple runs. For each metric, we underline the most advantagous variant in each section and mark the most advantagous variant across all sections in bold.}
    \label{tab:movid_performance}
\end{table}

\section{Related Work}
In this work, we follow a longer tradition of using autoencoders to obtain semantic scene decompositions~\citep{tagger,neuralem,iodine,spair,monet,space,genesis,original_paper}. 
An early work in this area is Neural-EM~\citep{neuralem}, which performs expectation maximization at the image level. It is proceeded by IODINE~\citep{iodine} and MONet~\citep{monet}, which learn variational autoencoders.
Slot Attention~\citep{original_paper} has already been used in many derivative works to scale object-centric learning to increasingly complex datasets. In particular, motion cues~\citep{kipf2022conditional,elsayed2022savi,slotformer}, high-level image features~\citep{dinosaur} and powerful decoder models~\citep{slate,steve} were found to be useful for obtaining desired behavior on real-world datasets.

Several previous works have discussed generalization capabilities of object-centric representations~\citep{generalization_and_robustness_ocl,dinosaur}. 
That the number of slot latents has influence on model performance has been noted by~\citet{dinosaur} where the authors illustrate that varying the number of slots has a significant impact on segmentation quality, determining whether objects are split into constituent parts or discovered in their entirety. 
To date, only few works investigate the role of attention normalization in the performance of Slot Attention. 
The first ablation evaluated for Slot Attention in the original paper~\citep{original_paper} is almost identical to the approach we discuss in Subsection~\ref{sub:weighted_sum}, differing from our method crucially in that the weighted sums are not scaled by a constant, which appears to result in poor training behavior.  
In a second ablation, the authors modify the first ablation by normalizing the update codes via layer normalization~\citep{layer_norm}. Comparable performance is reported to the original method. 

Normalization in Slot Attention has been investigated in~\citep{mesh}, where the authors propose to use the Sinkhorn-Knopp iteration to normalize attention matrices. In contrast to our work, they do not investigate the impact on generalization capabilities and substantially increase the complexity of the SA module.
More broadly, the use of the Sinkhorn-Knopp algorithm for normalization in multi-head attention modules has been scrutinized before by~\citet{sinkformer}.

\section{Conclusion}
Allowing models to dynamically find suitable levels of segmentation coarseness is an important problem in unsupervised object-centric representation learning. 
In Slot Attention it has been found that excess slots oftentimes split objects into parts or distribute responsibility for individual pixels among several slots. 
Hence, finding a reasonable number of slots has been crucial during training and inference. 
In this work, we discussed approaches for making the Slot Attention module more robust with respect to this choice. 
We studied normalizations of the slot-update vectors and analysed how they impact Slot Attention's ability to scale to different numbers of slots and objects during inference. 
On the theoretical side, we motivated this phenomenon via an analogy between Slot Attention and parameter estimation in vMF mixture models. 
In experiments, we demonstrated that our proposed normalization schemes increase the generalization capability of Slot Attention to varying number of slots and objects during inference. 
With these insights, we hope to contribute to increase performance of numerous existing and future applications of Slot Attention.

\section*{Acknowledgement}
This work was supported by Max Planck Society and Cyber Valley. The authors thank the International Max Planck Research School for Intelligent Systems (IMPRS-IS) for supporting Jan Achterhold.

\bibliography{refs}

\begin{thebibliography}{43}
\providecommand{\natexlab}[1]{#1}
\providecommand{\url}[1]{\texttt{#1}}
\expandafter\ifx\csname urlstyle\endcsname\relax
  \providecommand{\doi}[1]{doi: #1}\else
  \providecommand{\doi}{doi: \begingroup \urlstyle{rm}\Url}\fi

\bibitem[Ba et~al.(2016)Ba, Kiros, and Hinton]{layer_norm}
Lei~Jimmy Ba, Jamie~Ryan Kiros, and Geoffrey~E. Hinton.
\newblock Layer normalization.
\newblock \emph{CoRR}, abs/1607.06450, 2016.
\newblock URL \url{http://arxiv.org/abs/1607.06450}.

\bibitem[Banerjee et~al.(2003)Banerjee, Dhillon, Ghosh, and Sra]{banerjeen_vmf}
Arindam Banerjee, Inderjit~S. Dhillon, Joydeep Ghosh, and Suvrit Sra.
\newblock Generative model-based clustering of directional data.
\newblock In Lise Getoor, Ted~E. Senator, Pedro~M. Domingos, and Christos
  Faloutsos (eds.), \emph{Proceedings of the Ninth {ACM} {SIGKDD} International
  Conference on Knowledge Discovery and Data Mining, Washington, DC, USA,
  August 24 - 27, 2003}, pp.\  19--28. {ACM}, 2003.
\newblock \doi{10.1145/956750.956757}.
\newblock URL \url{https://doi.org/10.1145/956750.956757}.

\bibitem[Bishop(2006)]{bishop2006prml}
Christopher~M. Bishop.
\newblock \emph{Pattern Recognition and Machine Learning}.
\newblock Springer, 2006.

\bibitem[Burgess et~al.(2019)Burgess, Matthey, Watters, Kabra, Higgins,
  Botvinick, and Lerchner]{monet}
Christopher~P. Burgess, Lo{\"{\i}}c Matthey, Nicholas Watters, Rishabh Kabra,
  Irina Higgins, Matthew~M. Botvinick, and Alexander Lerchner.
\newblock {MONet}: Unsupervised scene decomposition and representation.
\newblock \emph{CoRR}, abs/1901.11390, 2019.
\newblock URL \url{http://arxiv.org/abs/1901.11390}.

\bibitem[Caron et~al.(2021)Caron, Touvron, Misra, J{\'{e}}gou, Mairal,
  Bojanowski, and Joulin]{dino}
Mathilde Caron, Hugo Touvron, Ishan Misra, Herv{\'{e}} J{\'{e}}gou, Julien
  Mairal, Piotr Bojanowski, and Armand Joulin.
\newblock Emerging properties in self-supervised vision transformers.
\newblock In \emph{International Conference on Computer Vision (ICCV)}, pp.\
  9630--9640. {IEEE}, 2021.
\newblock \doi{10.1109/ICCV48922.2021.00951}.
\newblock URL \url{https://doi.org/10.1109/ICCV48922.2021.00951}.

\bibitem[Chang et~al.(2022{\natexlab{a}})Chang, Griffiths, and
  Levine]{implicit_sa}
Michael Chang, Thomas~L. Griffiths, and Sergey Levine.
\newblock Object representations as fixed points: Training iterative refinement
  algorithms with implicit differentiation.
\newblock In Alice~H. Oh, Alekh Agarwal, Danielle Belgrave, and Kyunghyun Cho
  (eds.), \emph{Advances in Neural Information Processing Systems (NeurIPS)},
  2022{\natexlab{a}}.
\newblock URL \url{https://openreview.net/forum?id=-5rFUTO2NWe}.

\bibitem[Chang et~al.(2022{\natexlab{b}})Chang, Levine, and
  Griffiths]{ocl_nested}
Michael Chang, Sergey Levine, and Thomas~L. Griffiths.
\newblock Object-centric learning as nested optimization.
\newblock In \emph{ICLR2022 Workshop on the Elements of Reasoning: Objects,
  Structure and Causality}, 2022{\natexlab{b}}.
\newblock URL \url{https://openreview.net/forum?id=BCzevBOL5g9}.

\bibitem[Cho et~al.(2014)Cho, van Merrienboer, Bahdanau, and Bengio]{gru}
Kyunghyun Cho, Bart van Merrienboer, Dzmitry Bahdanau, and Yoshua Bengio.
\newblock On the properties of neural machine translation: Encoder-decoder
  approaches.
\newblock In Dekai Wu, Marine Carpuat, Xavier Carreras, and Eva~Maria Vecchi
  (eds.), \emph{Workshop on Syntax, Semantics and Structure in Statistical
  Translation (SSST@EMNLP)}, pp.\  103--111. Association for Computational
  Linguistics, 2014.
\newblock \doi{10.3115/v1/W14-4012}.
\newblock URL \url{https://aclanthology.org/W14-4012/}.

\bibitem[Cooijmans et~al.(2017)Cooijmans, Ballas, Laurent,
  G{\"{u}}l{\c{c}}ehre, and Courville]{cooijmans2017rnn}
Tim Cooijmans, Nicolas Ballas, C{\'{e}}sar Laurent, {\c{C}}aglar
  G{\"{u}}l{\c{c}}ehre, and Aaron~C. Courville.
\newblock Recurrent batch normalization.
\newblock In \emph{5th International Conference on Learning Representations,
  {ICLR} 2017, Toulon, France, April 24-26, 2017, Conference Track
  Proceedings}. OpenReview.net, 2017.
\newblock URL \url{https://openreview.net/forum?id=r1VdcHcxx}.

\bibitem[Crawford \& Pineau(2019)Crawford and Pineau]{spair}
Eric Crawford and Joelle Pineau.
\newblock Spatially invariant unsupervised object detection with convolutional
  neural networks.
\newblock In \emph{The Thirty-Third {AAAI} Conference on Artificial
  Intelligence, {AAAI} 2019, The Thirty-First Innovative Applications of
  Artificial Intelligence Conference, {IAAI} 2019, The Ninth {AAAI} Symposium
  on Educational Advances in Artificial Intelligence, {EAAI} 2019, Honolulu,
  Hawaii, USA, January 27 - February 1, 2019}, pp.\  3412--3420. {AAAI} Press,
  2019.
\newblock \doi{10.1609/AAAI.V33I01.33013412}.
\newblock URL \url{https://doi.org/10.1609/aaai.v33i01.33013412}.

\bibitem[Dempster et~al.(1977)Dempster, Laird, and Rubin]{original_em_paper}
A.~P. Dempster, N.~M. Laird, and D.~B. Rubin.
\newblock Maximum likelihood from incomplete data via the {EM} algorithm.
\newblock \emph{Journal of the Royal Statistical Society. Series B
  (Methodological)}, 39\penalty0 (1):\penalty0 1--38, 1977.
\newblock ISSN 00359246.
\newblock URL \url{http://www.jstor.org/stable/2984875}.

\bibitem[Dittadi et~al.(2022)Dittadi, Papa, Vita, Sch{\"{o}}lkopf, Winther, and
  Locatello]{generalization_and_robustness_ocl}
Andrea Dittadi, Samuele~S. Papa, Michele~De Vita, Bernhard Sch{\"{o}}lkopf, Ole
  Winther, and Francesco Locatello.
\newblock Generalization and robustness implications in object-centric
  learning.
\newblock In Kamalika Chaudhuri, Stefanie Jegelka, Le~Song, Csaba
  Szepesv{\'{a}}ri, Gang Niu, and Sivan Sabato (eds.), \emph{International
  Conference on Machine Learning, {ICML} 2022, 17-23 July 2022, Baltimore,
  Maryland, {USA}}, volume 162 of \emph{Proceedings of Machine Learning
  Research}, pp.\  5221--5285. {PMLR}, 2022.
\newblock URL \url{https://proceedings.mlr.press/v162/dittadi22a.html}.

\bibitem[Downs et~al.(2022)Downs, Francis, Koenig, Kinman, Hickman, Reymann,
  McHugh, and Vanhoucke]{gso_dataset}
Laura Downs, Anthony Francis, Nate Koenig, Brandon Kinman, Ryan Hickman, Krista
  Reymann, Thomas~B. McHugh, and Vincent Vanhoucke.
\newblock Google scanned objects: {A} high-quality dataset of {3D} scanned
  household items.
\newblock In \emph{International Conference on Robotics and Automation (ICRA)},
  pp.\  2553--2560. {IEEE}, 2022.
\newblock \doi{10.1109/ICRA46639.2022.9811809}.
\newblock URL \url{https://doi.org/10.1109/ICRA46639.2022.9811809}.

\bibitem[Elsayed et~al.(2022)Elsayed, Mahendran, van Steenkiste, Greff, Mozer,
  and Kipf]{elsayed2022savi}
Gamaleldin Elsayed, Aravindh Mahendran, Sjoerd van Steenkiste, Klaus Greff,
  Michael~C Mozer, and Thomas Kipf.
\newblock Savi++: Towards end-to-end object-centric learning from real-world
  videos.
\newblock In S.~Koyejo, S.~Mohamed, A.~Agarwal, D.~Belgrave, K.~Cho, and A.~Oh
  (eds.), \emph{Advances in Neural Information Processing Systems}, volume~35,
  pp.\  28940--28954. Curran Associates, Inc., 2022.
\newblock URL
  \url{https://proceedings.neurips.cc/paper_files/paper/2022/file/ba1a6ba05319e410f0673f8477a871e3-Paper-Conference.pdf}.

\bibitem[Engelcke et~al.(2020)Engelcke, Kosiorek, Jones, and Posner]{genesis}
Martin Engelcke, Adam~R. Kosiorek, Oiwi~Parker Jones, and Ingmar Posner.
\newblock {GENESIS:} generative scene inference and sampling with
  object-centric latent representations.
\newblock In \emph{International Conference on Learning Representations
  (ICLR)}. OpenReview.net, 2020.
\newblock URL \url{https://openreview.net/forum?id=BkxfaTVFwH}.

\bibitem[Fisher(1953)]{fisher_vmf}
Ronald~Aylmer Fisher.
\newblock Dispersion on a sphere.
\newblock \emph{Proceedings of the Royal Society of London. Series A.
  Mathematical and Physical Sciences}, 217\penalty0 (1130):\penalty0 295--305,
  1953.
\newblock \doi{10.1098/rspa.1953.0064}.
\newblock URL
  \url{https://royalsocietypublishing.org/doi/abs/10.1098/rspa.1953.0064}.

\bibitem[Glorot \& Bengio(2010)Glorot and Bengio]{glorot2010normalization}
Xavier Glorot and Yoshua Bengio.
\newblock Understanding the difficulty of training deep feedforward neural
  networks.
\newblock In Yee~Whye Teh and D.~Mike Titterington (eds.), \emph{Proceedings of
  the Thirteenth International Conference on Artificial Intelligence and
  Statistics, {AISTATS} 2010, Chia Laguna Resort, Sardinia, Italy, May 13-15,
  2010}, volume~9 of \emph{{JMLR} Proceedings}, pp.\  249--256. JMLR.org, 2010.
\newblock URL \url{http://proceedings.mlr.press/v9/glorot10a.html}.

\bibitem[Greff et~al.(2016)Greff, Rasmus, Berglund, Hao, Valpola, and
  Schmidhuber]{tagger}
Klaus Greff, Antti Rasmus, Mathias Berglund, Tele~Hotloo Hao, Harri Valpola,
  and J{\"{u}}rgen Schmidhuber.
\newblock Tagger: Deep unsupervised perceptual grouping.
\newblock In Daniel~D. Lee, Masashi Sugiyama, Ulrike von Luxburg, Isabelle
  Guyon, and Roman Garnett (eds.), \emph{Advances in Neural Information
  Processing Systems 29: Annual Conference on Neural Information Processing
  Systems 2016, December 5-10, 2016, Barcelona, Spain}, pp.\  4484--4492, 2016.
\newblock URL
  \url{https://proceedings.neurips.cc/paper/2016/hash/01eee509ee2f68dc6014898c309e86bf-Abstract.html}.

\bibitem[Greff et~al.(2017)Greff, van Steenkiste, and Schmidhuber]{neuralem}
Klaus Greff, Sjoerd van Steenkiste, and J{\"{u}}rgen Schmidhuber.
\newblock Neural expectation maximization.
\newblock In Isabelle Guyon, Ulrike von Luxburg, Samy Bengio, Hanna~M. Wallach,
  Rob Fergus, S.~V.~N. Vishwanathan, and Roman Garnett (eds.), \emph{Advances
  in Neural Information Processing Systems (NeurIPS)}, pp.\  6691--6701, 2017.
\newblock URL
  \url{https://proceedings.neurips.cc/paper/2017/hash/d2cd33e9c0236a8c2d8bd3fa91ad3acf-Abstract.html}.

\bibitem[Greff et~al.(2019)Greff, Kaufman, Kabra, Watters, Burgess, Zoran,
  Matthey, Botvinick, and Lerchner]{iodine}
Klaus Greff, Rapha{\"{e}}l~Lopez Kaufman, Rishabh Kabra, Nick Watters, Chris
  Burgess, Daniel Zoran, Loic Matthey, Matthew~M. Botvinick, and Alexander
  Lerchner.
\newblock Multi-object representation learning with iterative variational
  inference.
\newblock In Kamalika Chaudhuri and Ruslan Salakhutdinov (eds.),
  \emph{International Conference on Machine Learning (ICML)}, volume~97 of
  \emph{Proceedings of Machine Learning Research}, pp.\  2424--2433. {PMLR},
  2019.
\newblock URL \url{http://proceedings.mlr.press/v97/greff19a.html}.

\bibitem[Greff et~al.(2022)Greff, Belletti, Beyer, Doersch, Du, Duckworth,
  Fleet, Gnanapragasam, Golemo, Herrmann, Kipf, Kundu, Lagun, Laradji, Liu,
  Meyer, Miao, Nowrouzezahrai, {\"{O}}ztireli, Pot, Radwan, Rebain, Sabour,
  Sajjadi, Sela, Sitzmann, Stone, Sun, Vora, Wang, Wu, Yi, Zhong, and
  Tagliasacchi]{kubric}
Klaus Greff, Francois Belletti, Lucas Beyer, Carl Doersch, Yilun Du, Daniel
  Duckworth, David~J. Fleet, Dan Gnanapragasam, Florian Golemo, Charles
  Herrmann, Thomas Kipf, Abhijit Kundu, Dmitry Lagun, Issam~H. Laradji,
  Hsueh{-}Ti~Derek Liu, Henning Meyer, Yishu Miao, Derek Nowrouzezahrai,
  A.~Cengiz {\"{O}}ztireli, Etienne Pot, Noha Radwan, Daniel Rebain, Sara
  Sabour, Mehdi S.~M. Sajjadi, Matan Sela, Vincent Sitzmann, Austin Stone,
  Deqing Sun, Suhani Vora, Ziyu Wang, Tianhao Wu, Kwang~Moo Yi, Fangcheng
  Zhong, and Andrea Tagliasacchi.
\newblock Kubric: {A} scalable dataset generator.
\newblock In \emph{Conference on Computer Vision and Pattern Recognition
  (CVPR)}, pp.\  3739--3751. {IEEE}, 2022.
\newblock \doi{10.1109/CVPR52688.2022.00373}.
\newblock URL \url{https://doi.org/10.1109/CVPR52688.2022.00373}.

\bibitem[Hubert \& Arabie(1985)Hubert and Arabie]{adjusting_ari}
Lawrence Hubert and Phipps Arabie.
\newblock Comparing partitions.
\newblock \emph{Journal of Classification}, 2\penalty0 (1):\penalty0 193--218,
  Dec 1985.
\newblock ISSN 1432-1343.
\newblock \doi{10.1007/BF01908075}.
\newblock URL \url{https://doi.org/10.1007/BF01908075}.

\bibitem[Ioffe \& Szegedy(2015)Ioffe and Szegedy]{batch_norm}
Sergey Ioffe and Christian Szegedy.
\newblock Batch normalization: Accelerating deep network training by reducing
  internal covariate shift.
\newblock In Francis~R. Bach and David~M. Blei (eds.), \emph{Proceedings of the
  32nd International Conference on Machine Learning, {ICML} 2015, Lille,
  France, 6-11 July 2015}, volume~37 of \emph{{JMLR} Workshop and Conference
  Proceedings}, pp.\  448--456. JMLR.org, 2015.
\newblock URL \url{http://proceedings.mlr.press/v37/ioffe15.html}.

\bibitem[Johnson et~al.(2017)Johnson, Hariharan, van~der Maaten, Fei{-}Fei,
  Zitnick, and Girshick]{clevr_dataset}
Justin Johnson, Bharath Hariharan, Laurens van~der Maaten, Li~Fei{-}Fei,
  C.~Lawrence Zitnick, and Ross~B. Girshick.
\newblock {CLEVR:} {A} diagnostic dataset for compositional language and
  elementary visual reasoning.
\newblock In \emph{Conference on Computer Vision and Pattern Recognition
  (CVPR)}, pp.\  1988--1997. {IEEE} Computer Society, 2017.
\newblock \doi{10.1109/CVPR.2017.215}.
\newblock URL \url{https://doi.org/10.1109/CVPR.2017.215}.

\bibitem[Kabra et~al.(2019)Kabra, Burgess, Matthey, Kaufman, Greff, Reynolds,
  and Lerchner]{multi_object_datasets}
Rishabh Kabra, Chris Burgess, Loic Matthey, Raphael~Lopez Kaufman, Klaus Greff,
  Malcolm Reynolds, and Alexander Lerchner.
\newblock Multi-object datasets.
\newblock \url{https://github.com/deepmind/multi_object_datasets/}, 2019.
\newblock Accessed 2023-05-12.

\bibitem[Kingma \& Ba(2015)Kingma and Ba]{adam}
Diederik~P. Kingma and Jimmy Ba.
\newblock Adam: {A} method for stochastic optimization.
\newblock In Yoshua Bengio and Yann LeCun (eds.), \emph{International
  Conference on Learning Representations (ICLR)}, 2015.
\newblock URL \url{http://arxiv.org/abs/1412.6980}.

\bibitem[Kipf et~al.(2022)Kipf, Elsayed, Mahendran, Stone, Sabour, Heigold,
  Jonschkowski, Dosovitskiy, and Greff]{kipf2022conditional}
Thomas Kipf, Gamaleldin~Fathy Elsayed, Aravindh Mahendran, Austin Stone, Sara
  Sabour, Georg Heigold, Rico Jonschkowski, Alexey Dosovitskiy, and Klaus
  Greff.
\newblock Conditional object-centric learning from video.
\newblock In \emph{International Conference on Learning Representations}, 2022.
\newblock URL \url{https://openreview.net/forum?id=aD7uesX1GF_}.

\bibitem[Kirilenko et~al.(2023)Kirilenko, Kovalev, and
  Panov]{slot_mixture_models}
Daniil Kirilenko, Alexey Kovalev, and Aleksandr Panov.
\newblock Object-centric learning with slot mixture models.
\newblock \url{https://openreview.net/forum?id=AqX3oSbzyQ1}, 2023.
\newblock URL \url{https://openreview.net/forum?id=AqX3oSbzyQ1}.

\bibitem[Laurent et~al.(2016)Laurent, Pereyra, Brakel, Zhang, and
  Bengio]{laurent2016rnn}
C{\'{e}}sar Laurent, Gabriel Pereyra, Philemon Brakel, Ying Zhang, and Yoshua
  Bengio.
\newblock Batch normalized recurrent neural networks.
\newblock In \emph{2016 {IEEE} International Conference on Acoustics, Speech
  and Signal Processing, {ICASSP} 2016, Shanghai, China, March 20-25, 2016},
  pp.\  2657--2661. {IEEE}, 2016.
\newblock \doi{10.1109/ICASSP.2016.7472159}.
\newblock URL \url{https://doi.org/10.1109/ICASSP.2016.7472159}.

\bibitem[Lin et~al.(2020)Lin, Wu, Peri, Sun, Singh, Deng, Jiang, and
  Ahn]{space}
Zhixuan Lin, Yi{-}Fu Wu, Skand~Vishwanath Peri, Weihao Sun, Gautam Singh, Fei
  Deng, Jindong Jiang, and Sungjin Ahn.
\newblock {SPACE:} unsupervised object-oriented scene representation via
  spatial attention and decomposition.
\newblock In \emph{8th International Conference on Learning Representations,
  {ICLR} 2020, Addis Ababa, Ethiopia, April 26-30, 2020}. OpenReview.net, 2020.
\newblock URL \url{https://openreview.net/forum?id=rkl03ySYDH}.

\bibitem[Locatello et~al.(2020)Locatello, Weissenborn, Unterthiner, Mahendran,
  Heigold, Uszkoreit, Dosovitskiy, and Kipf]{original_paper}
Francesco Locatello, Dirk Weissenborn, Thomas Unterthiner, Aravindh Mahendran,
  Georg Heigold, Jakob Uszkoreit, Alexey Dosovitskiy, and Thomas Kipf.
\newblock Object-centric learning with slot attention.
\newblock In Hugo Larochelle, Marc'Aurelio Ranzato, Raia Hadsell,
  Maria{-}Florina Balcan, and Hsuan{-}Tien Lin (eds.), \emph{Advances in Neural
  Information Processing Systems 33: Annual Conference on Neural Information
  Processing Systems 2020, NeurIPS 2020, December 6-12, 2020, virtual}, 2020.
\newblock URL
  \url{https://proceedings.neurips.cc/paper/2020/hash/8511df98c02ab60aea1b2356c013bc0f-Abstract.html}.

\bibitem[Luong et~al.(2015)Luong, Pham, and Manning]{luong2015effective}
Thang Luong, Hieu Pham, and Christopher~D. Manning.
\newblock Effective approaches to attention-based neural machine translation.
\newblock In Llu{\'{\i}}s M{\`{a}}rquez, Chris Callison{-}Burch, Jian Su,
  Daniele Pighin, and Yuval Marton (eds.), \emph{Proceedings of the 2015
  Conference on Empirical Methods in Natural Language Processing, {EMNLP} 2015,
  Lisbon, Portugal, September 17-21, 2015}, pp.\  1412--1421. The Association
  for Computational Linguistics, 2015.
\newblock \doi{10.18653/v1/d15-1166}.
\newblock URL \url{https://doi.org/10.18653/v1/d15-1166}.

\bibitem[Pascanu et~al.(2013)Pascanu, Mikolov, and
  Bengio]{pascanu2013vanishing}
Razvan Pascanu, Tom{\'{a}}s Mikolov, and Yoshua Bengio.
\newblock On the difficulty of training recurrent neural networks.
\newblock In \emph{Proceedings of the 30th International Conference on Machine
  Learning, {ICML} 2013, Atlanta, GA, USA, 16-21 June 2013}, volume~28 of
  \emph{{JMLR} Workshop and Conference Proceedings}, pp.\  1310--1318.
  JMLR.org, 2013.
\newblock URL \url{http://proceedings.mlr.press/v28/pascanu13.html}.

\bibitem[Rand(1971)]{ari}
William~M. Rand.
\newblock Objective criteria for the evaluation of clustering methods.
\newblock \emph{Journal of the American Statistical Association}, 66\penalty0
  (336):\penalty0 846--850, 1971.
\newblock ISSN 01621459.
\newblock URL \url{http://www.jstor.org/stable/2284239}.

\bibitem[Sander et~al.(2022)Sander, Ablin, Blondel, and
  Peyr{\'{e}}]{sinkformer}
Michael~E. Sander, Pierre Ablin, Mathieu Blondel, and Gabriel Peyr{\'{e}}.
\newblock Sinkformers: Transformers with doubly stochastic attention.
\newblock In Gustau Camps{-}Valls, Francisco J.~R. Ruiz, and Isabel Valera
  (eds.), \emph{International Conference on Artificial Intelligence and
  Statistics, {AISTATS} 2022, 28-30 March 2022, Virtual Event}, volume 151 of
  \emph{Proceedings of Machine Learning Research}, pp.\  3515--3530. {PMLR},
  2022.
\newblock URL \url{https://proceedings.mlr.press/v151/sander22a.html}.

\bibitem[Seitzer et~al.(2023)Seitzer, Horn, Zadaianchuk, Zietlow, Xiao,
  Simon{-}Gabriel, He, Zhang, Sch{\"{o}}lkopf, Brox, and Locatello]{dinosaur}
Maximilian Seitzer, Max Horn, Andrii Zadaianchuk, Dominik Zietlow, Tianjun
  Xiao, Carl{-}Johann Simon{-}Gabriel, Tong He, Zheng Zhang, Bernhard
  Sch{\"{o}}lkopf, Thomas Brox, and Francesco Locatello.
\newblock Bridging the gap to real-world object-centric learning.
\newblock In \emph{The Eleventh International Conference on Learning
  Representations, {ICLR} 2023, Kigali, Rwanda, May 1-5, 2023}. OpenReview.net,
  2023.
\newblock URL \url{https://openreview.net/pdf?id=b9tUk-f\_aG}.

\bibitem[Singh et~al.(2022{\natexlab{a}})Singh, Deng, and Ahn]{slate}
Gautam Singh, Fei Deng, and Sungjin Ahn.
\newblock Illiterate {DALL-E} learns to compose.
\newblock In \emph{The Tenth International Conference on Learning
  Representations, {ICLR} 2022, Virtual Event, April 25-29, 2022}.
  OpenReview.net, 2022{\natexlab{a}}.
\newblock URL \url{https://openreview.net/forum?id=h0OYV0We3oh}.

\bibitem[Singh et~al.(2022{\natexlab{b}})Singh, Wu, and Ahn]{steve}
Gautam Singh, Yi{-}Fu Wu, and Sungjin Ahn.
\newblock Simple unsupervised object-centric learning for complex and
  naturalistic videos.
\newblock In Sanmi Koyejo, S.~Mohamed, A.~Agarwal, Danielle Belgrave, K.~Cho,
  and A.~Oh (eds.), \emph{Advances in Neural Information Processing Systems 35:
  Annual Conference on Neural Information Processing Systems 2022, NeurIPS
  2022, New Orleans, LA, USA, November 28 - December 9, 2022},
  2022{\natexlab{b}}.
\newblock URL
  \url{http://papers.nips.cc/paper\_files/paper/2022/hash/735c847a07bf6dd4486ca1ace242a88c-Abstract-Conference.html}.

\bibitem[Stange et~al.(2023)Stange, Lo, Sridhar, and Rajesh]{sa_bottleneck}
Andrew Stange, Robert Lo, Abishek Sridhar, and Kousik Rajesh.
\newblock Exploring the role of the bottleneck in slot-based models through
  covariance regularization.
\newblock \emph{CoRR}, abs/2306.02577, 2023.

\bibitem[Watters et~al.(2019)Watters, Matthey, Burgess, and
  Lerchner]{spatial_broadcast_decoder}
Nicholas Watters, Lo{\"{\i}}c Matthey, Christopher~P. Burgess, and Alexander
  Lerchner.
\newblock Spatial broadcast decoder: {A} simple architecture for learning
  disentangled representations in {VAEs}.
\newblock \emph{CoRR}, abs/1901.07017, 2019.
\newblock URL \url{http://arxiv.org/abs/1901.07017}.

\bibitem[Wightman(2019)]{timm}
Ross Wightman.
\newblock Pytorch image models.
\newblock \url{https://github.com/rwightman/pytorch-image-models}, 2019.
\newblock Accessed: 2023-05-24.

\bibitem[Wu et~al.(2023)Wu, Dvornik, Greff, Kipf, and Garg]{slotformer}
Ziyi Wu, Nikita Dvornik, Klaus Greff, Thomas Kipf, and Animesh Garg.
\newblock Slotformer: Unsupervised visual dynamics simulation with
  object-centric models.
\newblock In \emph{The Eleventh International Conference on Learning
  Representations, {ICLR} 2023, Kigali, Rwanda, May 1-5, 2023}. OpenReview.net,
  2023.
\newblock URL \url{https://openreview.net/pdf?id=TFbwV6I0VLg}.

\bibitem[Zhang et~al.(2023)Zhang, Zhang, Lacoste{-}Julien, Burghouts, and
  Snoek]{mesh}
Yan Zhang, David~W. Zhang, Simon Lacoste{-}Julien, Gertjan~J. Burghouts, and
  Cees G.~M. Snoek.
\newblock Unlocking slot attention by changing optimal transport costs.
\newblock In Andreas Krause, Emma Brunskill, Kyunghyun Cho, Barbara Engelhardt,
  Sivan Sabato, and Jonathan Scarlett (eds.), \emph{International Conference on
  Machine Learning, {ICML} 2023, 23-29 July 2023, Honolulu, Hawaii, {USA}},
  volume 202 of \emph{Proceedings of Machine Learning Research}, pp.\
  41931--41951. {PMLR}, 2023.
\newblock URL \url{https://proceedings.mlr.press/v202/zhang23ba.html}.

\end{thebibliography}
\bibliographystyle{tmlr}

\clearpage

\appendix
\section{Reconstruction Loss}
\label{appendix:reconstruction-loss}
In this subsection, we investigate how the $\ell^2$ reconstruction loss is impacted by the choice of normalization. As in previous figures, we explore how this objective varies as object and slot-count is varied during inference. For each experiment provided in the main paper we provide a corresponding figure on the $\ell^2$ loss.

\paragraph{CLEVR (6, 7)} We first consider the experiment in which we trained a model with 7 slots on CLEVR6. We note in Subfigures~\ref{subfig:plain_clevr_6-7_6_reconstruction} and~\ref{subfig:plain_clevr_6-7_10_reconstruction} that all normalization approaches suffer under excess slots during inference. The layer norm variant generally performs best in this context, while the baseline and weighted sum variants perform comparably to each other. In Figure~\ref{subfig:plain_clevr_performance_vs_objects_reconstruction}, we note that reconstruction quality decreases with increasing object count for all variants. It can be observed that this deterioration progresses fastest for the baseline, slowest with batch normalization, and at a seemingly similar rate for the layer norm and weighted sum variants.

\begin{figure}[ht]
    \begin{subfigure}[t]{0.3\textwidth}
        \includegraphics[width=\textwidth]{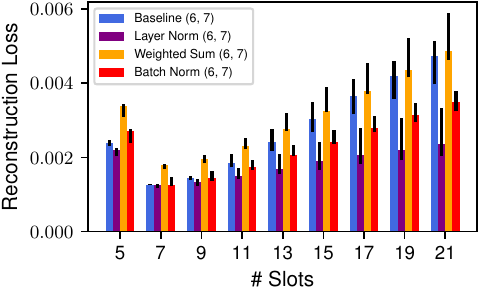}
        \subcaption{Reconstruction loss ($\downarrow$) on CLEVR6}
        \label{subfig:plain_clevr_6-7_6_reconstruction}
    \end{subfigure} \hfill
    \begin{subfigure}[t]{0.3\textwidth}
        \includegraphics[width=\textwidth]{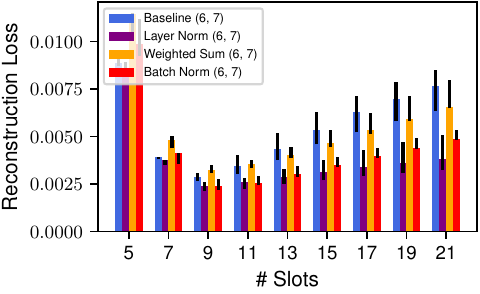}
        \subcaption{Reconstruction loss ($\downarrow$) on CLEVR10}
        \label{subfig:plain_clevr_6-7_10_reconstruction}
    \end{subfigure} \hfill
    \begin{subfigure}[t]{.3\textwidth}
            \centering
            \includegraphics[width=\textwidth]{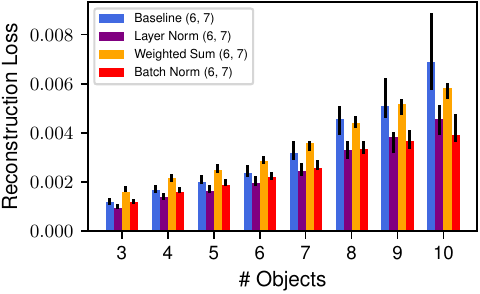}
            \caption{Reconstruction loss ($\downarrow$) on CLEVR}
            \label{subfig:plain_clevr_performance_vs_objects_reconstruction}
    \end{subfigure}
    \caption{Dependence of reconstruction quality on slot and object count. Models are trained on CLEVR6 with 7 slots. Note the non-zero y-intercept.}
    \label{fig:reconstruction_loss_plain_clevr_6-7}
\end{figure}

\paragraph{CLEVR (6, 11)} Figure~\ref{fig:reconstruction_loss_clevr_6-11} illustrates the behavior of reconstruction losses for models trained on CLEVR6 with excess slots. We find that, analogous to our observations in Subsection~\ref{sub:experiments_clevr}, varying slot count during inference has a lesser effect than in the previous experiment (compare Subfigures~\ref{subfig:reconstruction_loss_clevr_6-11_6} and~\ref{subfig:reconstruction_loss_clevr_6-11_10} to Subfigures~\ref{subfig:plain_clevr_6-7_6_reconstruction} and~\ref{subfig:plain_clevr_6-7_10_reconstruction}). Generally, the weighted sum variant has the highest reconstruction loss, while the layer norm variant has the lowest. The baseline and the batch norm variant perform comparably. As before, we observe in Subfigure~\ref{subfig:reconstruction_loss_clevr_6-11_objects} that reconstruction loss increases with increasing object count.

\begin{figure}[ht]
    \begin{subfigure}[t]{0.3\textwidth}
        \includegraphics[width=\textwidth]{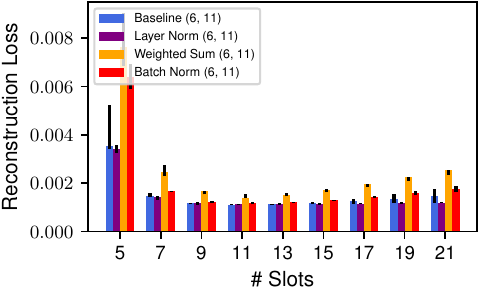}
        \subcaption{$\ell^2$ loss ($\downarrow$) on CLEVR6 }
        \label{subfig:reconstruction_loss_clevr_6-11_6}
    \end{subfigure} \hfill
    \begin{subfigure}[t]{0.3\textwidth}
        \includegraphics[width=\textwidth]{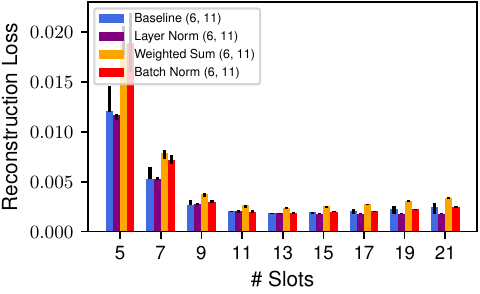}
        \subcaption{$\ell^2$ loss ($\downarrow$) on CLEVR10}
        \label{subfig:reconstruction_loss_clevr_6-11_10}
    \end{subfigure} \hfill
    \begin{subfigure}[t]{.32\textwidth}
            \includegraphics[width=\textwidth]{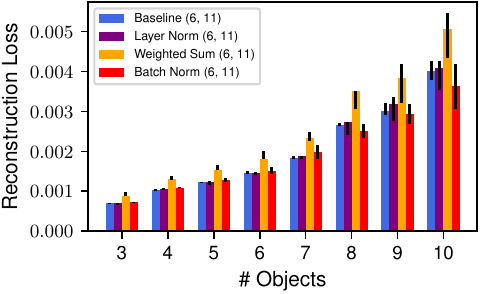}
            \caption{$\ell^2$ loss ($\downarrow$) on CLEVR}
            \label{subfig:reconstruction_loss_clevr_6-11_objects}
    \end{subfigure}
    \caption{Dependence of reconstruction quality on slot and object count. Models are trained on CLEVR6 with 11 slots.}
    \label{fig:reconstruction_loss_clevr_6-11}
\end{figure}

\paragraph{Training on MOVi-C} We now shift our focus to the Dinosaur models. Figure~\ref{fig:reconstruction_loss_dinosaur_10-11} shows the reconstruction loss for the models trained on MOVi-C10 with 11 slots. Contrary to previous observations, we note in Subfigures~\ref{subfig:dinosaur_10-11_6_reconstruction} and \ref{subfig:dinosaur10-11_objects_reconstruction} that reconstruction losses improve as the slot count increases. 
Generally, it appears that the weighted sum variant performs worst w.r.t. the $\ell^2$ loss, while the other variants perform comparably to each other.
We observe analogous behavior in Figures~\ref{fig:reconstruction_loss_dinosaur_6-7} and~\ref{fig:reconstruction_loss_dinosaur_excess_slots_more_slots}.

\begin{figure}[ht]
    \begin{subfigure}[t]{0.3\textwidth}
        \includegraphics[width=\textwidth]{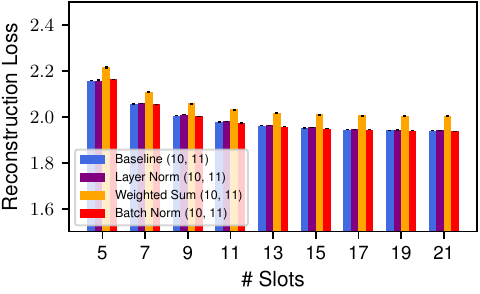}
        \subcaption{$\ell^2$ loss ($\downarrow$) on MOVi-C6}
        \label{subfig:dinosaur_10-11_6_reconstruction}
    \end{subfigure} \hfill
    \begin{subfigure}[t]{0.3\textwidth}
        \includegraphics[width=\textwidth]{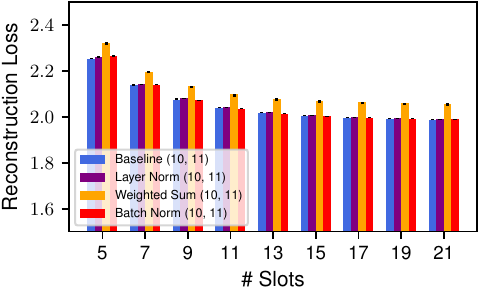}
        \subcaption{$\ell^2$ loss ($\downarrow$) on MOVi-C10}
        \label{subfig:dinosaur_10-11_10_reconstruction}
    \end{subfigure}\hfill
    \begin{subfigure}[t]{0.3\textwidth}
        \includegraphics[width=\textwidth]{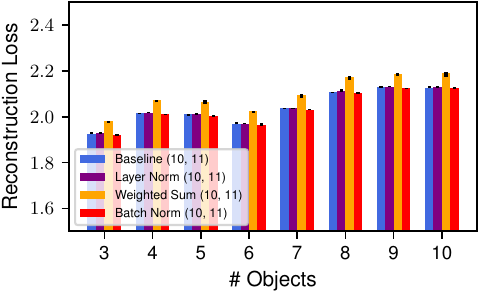}
        \subcaption{$\ell^2$ loss ($\downarrow$) on MOVi-C}
        \label{subfig:dinosaur10-11_objects_reconstruction}
    \end{subfigure}
    \caption{Dependence of reconstruction quality on slot and object count. Models are trained on MOVi-C10 with 11 slots.}
    \label{fig:reconstruction_loss_dinosaur_10-11}
\end{figure}

\begin{figure}[ht]
    \begin{subfigure}[t]{0.3\textwidth}
        \includegraphics[width=\textwidth,trim={0 0.1cm 0 0},clip]{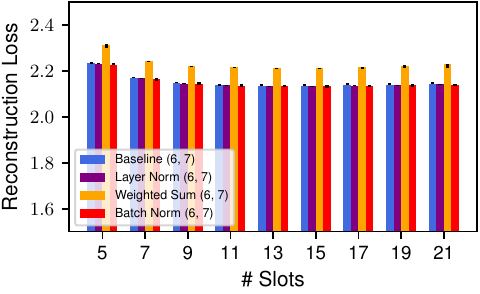}
        \subcaption{$\ell^2$ loss ($\downarrow$) on MOVi-C6}
    \end{subfigure} \hfill
    \begin{subfigure}[t]{0.3\textwidth}
        \includegraphics[width=\textwidth,trim={0 0.1cm 0 0},clip]{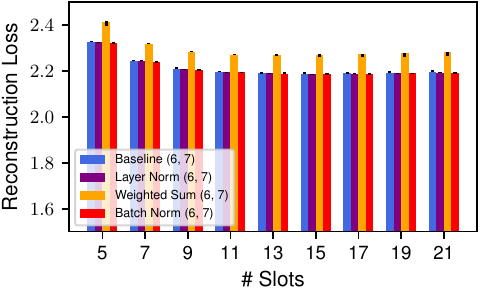}
        \subcaption{$\ell^2$ loss ($\downarrow$) on MOVi-C10}
        \label{subfig:dinosaur_6-7_movic10_reconstruction}
    \end{subfigure} \hfill
    \begin{subfigure}[t]{0.3\textwidth}
        \includegraphics[width=\textwidth,trim={0 0.1cm 0 0},clip]{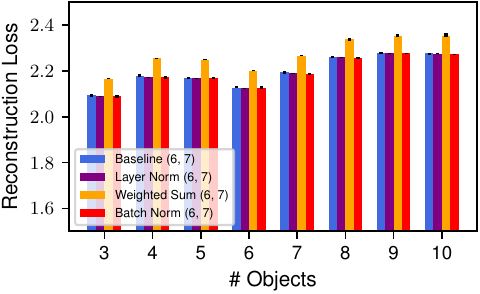}
        \caption{$\ell^2$ loss ($\downarrow$) on MOVi-C}
    \end{subfigure}
    \caption{Dependence of reconstruction quality on slot and object count. Models are trained on MOVi-C6 with 7 slots.}
    \label{fig:reconstruction_loss_dinosaur_6-7}
\end{figure}

\begin{figure}[ht]
    \begin{subfigure}[t]{0.3\textwidth}
        \includegraphics[width=\textwidth]{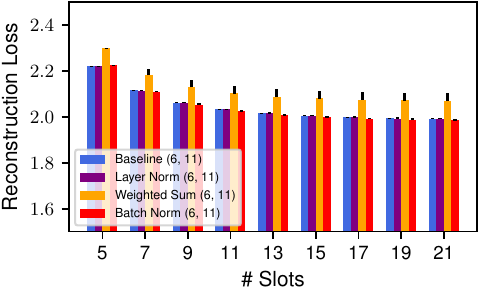}
        \subcaption{$\ell^2$ loss ($\downarrow$) on MOVi-C6}
        \label{subfig:dinosaur_6-11_movic6_reconstruction}
    \end{subfigure} \hfill
    \begin{subfigure}[t]{0.3\textwidth}
        \includegraphics[width=\textwidth]{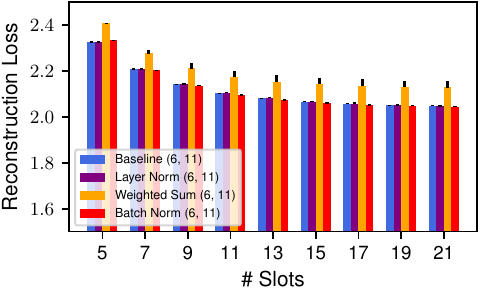}
        \subcaption{$\ell^2$ loss ($\downarrow$) on MOVi-C10}
        \label{subfig:dinosaur_6-11_movic10_reconstruction}
    \end{subfigure}\hfill
    \begin{subfigure}[t]{0.3\textwidth}
        \includegraphics[width=\textwidth]{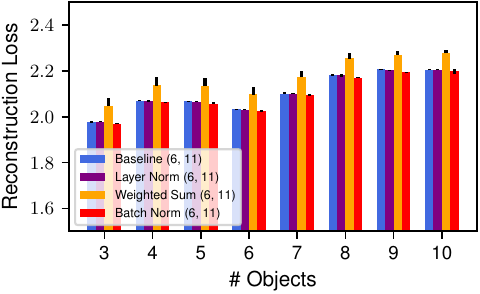}
        \caption{$\ell^2$ loss ($\downarrow$) on MOVi-C}
        \label{subfig:dinosaur_6-11_objects_reconstruction}
    \end{subfigure}
    \caption{Dependence of reconstruction quality on slot and object count. Models are trained on MOVi-C6 with 11 slots.}
    \label{fig:reconstruction_loss_dinosaur_excess_slots_more_slots}
\end{figure}
\FloatBarrier

\section{Lemmata on Layer Normalization}
\label{appendix:lemma-layernorm}
In this subsection, we will present some basic lemmata on the $\layernorm$ module which we use in Subsection~\ref{appendix-proof:weighted_sum_inference_function}. For $\tilde\vx_n \in \R^\dimin$, $\layernorm(\tilde\vx_n)$ refers to the following operation:
\begin{equation}
    \layernorm(\tilde\vx_n) := \diag(\alpha) \frac{\tilde{\vx}_n - \mathbb{E}[\tilde{\vx}_n]\mathbbm{1}}{\sqrt{\Var[\tilde{\vx}_n] + \epsilon}} + \beta
\end{equation}
where $\alpha, \beta\in\R^D$ are learnable parameters, $\epsilon > 0$ is a constant, $\mathbbm{1}$ is the all-ones vector, and $\mathbb{E}[\tilde{\vx}_k]$ and $\Var[\tilde{\vx}_k]$ are the expectation and variance of the entries of $\tilde{\vx}_k$, respectively:
\noindent\begin{tabularx}{\textwidth}{@{}XX@{}}
  \begin{equation}
  \mathbb{E}[\tilde{\vx}_n] := \frac{1}{D}\tilde{\vx}_n^\top \mathbbm{1}
  \end{equation} &
  \begin{equation}
    \Var[\tilde{\vx}_n] := \frac{1}{D} \Vert\tilde{\vx}_n -\mathbb{E}[\tilde{\vx}_n]\mathbbm{1}\Vert_2^2
  \end{equation}
\end{tabularx}

Given these definitions, we present the following lemma, giving an expression for the affine hull of the image of the layer normalization module.
\begin{lemma}
    The vector $\tilde{\vx}_k - \mathbb{E}[\tilde{\vx}_k]\mathbbm{1}$ is the orthogonal projection of $\tilde\vx_k$ onto the orthogonal complement of the span of the all-ones vector $\langle \mathbbm{1} \rangle^\perp \subsetneq \R^\dimin$. Hence, the image of $\layernorm$ is contained in a $(\dimin - 1)$-dimensional affine subspace of $\R^\dimin$. More concretely, the affine hull of the image of a $\layernorm$ module with parameters $\alpha, \beta \in \R^\dimin$ is given via:
    \begin{equation}
        \aff\left(\layernorm\left[\R^\dimin\right]\right) = \diag(\alpha)\mathbbm{1}^\perp + \beta
    \end{equation}
    Here and in the following, the left-multiplication of the diagonal matrix $\diag(\alpha)$ with any set (e.g. the vector space $\mathbbm{1}^\perp$) refers to the image of that set under the left-multiplication.
    \label{lemma:strict_subspace}
\end{lemma}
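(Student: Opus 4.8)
The plan is to prove the three assertions in sequence, the affine-hull \emph{equality} being the only nontrivial step.

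First I would verify the orthogonal-projection claim by a one-line computation: the orthogonal projection of $\tilde\vx_k$ onto $\langle\mathbbm{1}\rangle^\perp$ equals $\tilde\vx_k - \frac{\tilde\vx_k^\top\mathbbm{1}}{\mathbbm{1}^\top\mathbbm{1}}\mathbbm{1}$, and since $\mathbbm{1}^\top\mathbbm{1} = \dimin$ together with $\mathbb{E}[\tilde\vx_k] = \frac{1}{\dimin}\tilde\vx_k^\top\mathbbm{1}$ this is exactly $\tilde\vx_k - \mathbb{E}[\tilde\vx_k]\mathbbm{1}$. In particular this vector lies in $\mathbbm{1}^\perp$, and dividing by the strictly positive scalar $\sqrt{\Var[\tilde\vx_k]+\epsilon}$ keeps it there, since $\mathbbm{1}^\perp$ is a linear subspace. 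Applying $\diag(\alpha)$ and translating by $\beta$ then shows $\layernorm(\tilde\vx_k) \in \diag(\alpha)\mathbbm{1}^\perp + \beta$, an affine set of dimension at most $\dimin-1$; this already gives the containment $\aff(\layernorm[\R^\dimin]) \subseteq \diag(\alpha)\mathbbm{1}^\perp + \beta$, because the affine hull of a subset of an affine set remains inside that affine set.

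The main obstacle is the reverse inclusion. Writing $\bm{y} := \tilde\vx_k - \mathbb{E}[\tilde\vx_k]\mathbbm{1}$, I would observe that the map $\tilde\vx_k \mapsto \bm{y}$ is surjective onto $\mathbbm{1}^\perp$ and that $\Var[\tilde\vx_k] = \frac{1}{\dimin}\Vert\bm{y}\Vert_2^2$, so the image of the normalization step is precisely $\{\,\bm{y}/\sqrt{\tfrac{1}{\dimin}\Vert\bm{y}\Vert_2^2 + \epsilon} : \bm{y}\in\mathbbm{1}^\perp\,\}$. Along any ray $\bm{y}=t\hat{\bm{y}}$ with $\hat{\bm{y}}$ a unit vector and $t\geq 0$, the resulting magnitude $t/\sqrt{t^2/\dimin + \epsilon}$ is continuous, vanishes at $t=0$, and increases monotonically to the limit $\sqrt{\dimin}$ as $t\to\infty$. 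Since every direction in $\mathbbm{1}^\perp$ is attainable, the image of the normalization is the open ball of radius $\sqrt{\dimin}$ in $\mathbbm{1}^\perp$, whose affine hull is all of the $(\dimin-1)$-dimensional space $\mathbbm{1}^\perp$ (an open ball of positive radius contains an affinely independent set spanning the space).

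To finish, I would use that an affine map $T$ satisfies $\aff(T(S)) = T(\aff(S))$: applying $\diag(\alpha)$ followed by the translation by $\beta$ to the previous identity gives $\aff(\layernorm[\R^\dimin]) = \diag(\alpha)\mathbbm{1}^\perp + \beta$, which is the claimed equality and simultaneously yields the containment in a $(\dimin-1)$-dimensional affine subspace. This argument also covers the degenerate cases in which some entries of $\alpha$ vanish: the right-hand side $\diag(\alpha)\mathbbm{1}^\perp+\beta$ then collapses to a lower-dimensional affine set, and the same computation still identifies it with the affine hull.
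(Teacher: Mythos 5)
Your proposal is correct and follows essentially the same route as the paper: establish that $\tilde\vx_k - \mathbb{E}[\tilde\vx_k]\mathbbm{1}$ is the orthogonal projection onto $\langle\mathbbm{1}\rangle^\perp$, then use that affine hulls commute with affine maps to reduce the claim to computing the affine hull of the image of the normalization step $g$. The one place you go beyond the paper is the reverse inclusion: the paper simply asserts that since $g$ differs from the projection by a (input-dependent) multiplicative factor its image has the same affine span $\mathbbm{1}^\perp$, whereas you explicitly identify that image as the open ball of radius $\sqrt{\dimin}$ in $\mathbbm{1}^\perp$ via the monotonicity of $t \mapsto t/\sqrt{t^2/\dimin + \epsilon}$; this makes rigorous a step the paper leaves terse, and your remark on degenerate $\alpha$ is a harmless extra that the paper does not address.
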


\begin{proof}
    Define $e_1 := 1/\sqrt{D}\mathbbm{1}$ and extend to an orthonormal basis $e_1,\mathellipsis,e_\dimin$ via Gram-Schmidt. 
    Then we have $\langle \mathbbm{1} \rangle^\perp = \langle e_2, \mathellipsis,e_\dimin\rangle$ and for an arbitrary $\tilde\vx_n$ we have the orthogonal decomposition:
    \begin{equation}
        \tilde\vx_n = \sum_{d = 1}^\dimin \langle \tilde\vx_n, e_d\rangle e_d
        \label{eq:orthogonal_decomposition}
    \end{equation}
    The orthogonal projection onto $\langle \mathbbm{1} \rangle^\perp$ is then given by
    \begin{equation}
        \sum_{d = 2}^\dimin \langle \tilde\vx_n, e_d\rangle e_d = \tilde\vx_n - \langle \tilde\vx_n, e_1\rangle e_1
    \end{equation}
    We note that we may rewrite
    \begin{equation}
        \langle\tilde\vx_n, e_1\rangle e_1 = \left(\frac{1}{\sqrt{D}}\tilde\vx_n^\top \mathbbm{1}\right)\frac{1}{\sqrt{D}}\mathbbm{1} = \left(\frac{1}{D}\tilde\vx_n^\top \mathbbm{1}\right)\mathbbm{1} = \mathbb{E}[\tilde{\vx}_k]\mathbbm{1}
    \end{equation}
    Hence, $\tilde\vx_n - \mathbb{E}[\tilde\vx_n]\mathbbm{1}$ indeed realizes the orthogonal projection of $\tilde\vx_n$ onto $\langle \mathbbm{1} \rangle^\perp$. Clearly, $\langle \mathbbm{1} \rangle^\perp$ is a $(\dimin-1)$-dimensional linear subspace of $\R^\dimin$ (being the span of $\dimin - 1$ many vectors). We note that the affine hull commutes with affine functions. Hence, we have:
    \begin{equation}
        \aff\left(\layernorm \left[\R^\dimin\right]\right) = \diag(\alpha) \aff\left(g\left[\R^\dimin\right]\right) + \beta
    \end{equation}
    where the function $g$ is given as:
    \begin{equation}
       g(\tilde\vx_n) := \frac{\tilde{\vx}_n - \mathbb{E}[\tilde{\vx}_n]\mathbbm{1}}{\sqrt{\Var[\tilde{\vx}_n] + \epsilon}}
    \end{equation}
    We have already shown that $\tilde{\vx}_k - \mathbb{E}[\tilde{\vx}_k]\mathbbm{1}$ is exactly the orthogonal projection onto $\mathbbm{1}^\perp$. Since $g$ differs from this projection via a multiplicative factor, the affine span of its image is exactly the affine span of the projection, which is $\mathbbm{1}^\perp$. Thus, we have shown:
    \begin{equation}
        \aff\left(\layernorm\left[\R^\dimin\right]\right) = \diag(\alpha) \mathbbm{1}^\perp + \beta
    \end{equation}
\end{proof}

We can write these affine subspaces in a particular fashion, as we show in the following lemma. This result will be useful for explicitly writing down a function satisfying Proposition~\ref{prop:weighted_sum_inference_function}.

\begin{lemma}
    \label{lemma:representing-affine-spaces}
    For any affine subspace $A \subset \R^\dimin$, there exists a unique vector $\bm{a} \in A$ and a unique vectorspace $V \subset \R^\dimin$ such that $\bm{a} \in V^\perp$ is orthogonal to $V$ and we have $A = \bm{a} + V$.
\end{lemma}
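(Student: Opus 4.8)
The plan is to first extract the unique linear \emph{direction space} attached to $A$, then use an orthogonal projection to pin down the basepoint inside $A \cap V^\perp$, and finally check that both objects are forced to be unique. Throughout I take $A$ to be nonempty, as is the case for the affine hulls arising in Lemma~\ref{lemma:strict_subspace}.

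First I would recall the standard fact that a nonempty affine subspace $A \subset \R^\dimin$ determines a linear direction space
\[
V := \{\, \vx - \vx' : \vx, \vx' \in A \,\},
\]
and that $A = \bm{p} + V$ for every choice of $\bm{p} \in A$. Because $V$ is recovered from $A$ alone by subtracting pairs of its elements, any representation $A = \bm{a} + V'$ with $V'$ a linear subspace must satisfy $V' = \{\, \vx - \vx' : \vx, \vx' \in A \,\} = V$. This already settles the uniqueness of $V$ and reduces the task to producing a unique basepoint in $A \cap V^\perp$.

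For existence, I would fix any $\bm{p} \in A$ and decompose it along the orthogonal splitting $\R^\dimin = V \oplus V^\perp$ as $\bm{p} = \bm{p}_V + \bm{p}_{V^\perp}$ with $\bm{p}_V \in V$ and $\bm{p}_{V^\perp} \in V^\perp$. Setting $\bm{a} := \bm{p}_{V^\perp} = \bm{p} - \bm{p}_V$ gives $\bm{a} \in A$ (since $-\bm{p}_V \in V$) and $\bm{a} \in V^\perp$ by construction, so $A = \bm{a} + V$ with $\bm{a}$ orthogonal to $V$. For uniqueness of $\bm{a}$, suppose $\bm{a}, \bm{a}' \in A \cap V^\perp$; then $\bm{a} - \bm{a}'$ lies in $V$ (as a difference of two elements of $A$) and simultaneously in $V^\perp$, whence $\bm{a} - \bm{a}' \in V \cap V^\perp = \{\bm{0}\}$ and $\bm{a} = \bm{a}'$. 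I do not expect a genuine obstacle; the only step deserving care is arguing that $V$ is intrinsic to $A$, since this is precisely what makes the orthogonal complement $V^\perp$ — and therefore the projection defining $\bm{a}$ — unambiguous, after which everything reduces to routine finite-dimensional linear algebra.
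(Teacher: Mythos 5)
Your proof is correct, and it reaches the same decomposition as the paper but builds it in the opposite order. The paper first constructs $\bm{a}$ as the orthogonal projection of the origin onto $A$, using the variational characterization $(\bm{a}-0)^\top(\bm{a}'-\bm{a})=0$ of the nearest point, and only then defines $V := A - \bm{a}$ and verifies $\bm{a}\in V^\perp$; its uniqueness step proceeds by algebraic manipulation of $\bm{a}+V=\bm{a}'+V'$ and an explicit computation showing $(\bm{a}-\bm{a}')^\top(\bm{a}-\bm{a}')=0$. You instead extract the direction space $V$ intrinsically as the difference set of $A$, which makes its uniqueness immediate, and then obtain $\bm{a}$ by splitting an arbitrary basepoint along $\R^\dimin = V\oplus V^\perp$; uniqueness of $\bm{a}$ then falls out of $V\cap V^\perp=\{\bm 0\}$ in one line. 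Your route buys a cleaner logical structure — it needs only the direct-sum decomposition rather than the existence and characterization of the nearest-point projection onto an affine set, and both uniqueness claims become essentially definitional — while the paper's construction has the advantage that the identity ``$\bm{a}$ is the projection of the origin onto $A$'' is stated explicitly, a fact it reuses verbatim in the proof of Lemma~\ref{lemma:almost-always-translation} (where $\bm{a}=0$ is equated with $0\in A$); with your formulation that fact is still true but would need a one-line remark. Your explicit caveat that $A$ must be nonempty is a small point the paper leaves implicit.
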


\begin{proof}
    Given any affine subspace $A \subset \R^\dimin$, let $\bm{a} \in A$ be the orthogonal projection of the origin onto $A$. By definition of the orthogonal projection, for any other $\bm{a}' \in A$, we have $(\bm{a} - 0)^\top (\bm{a}' - \bm{a}) = 0$. Moreover, $V := A - \bm{a}$ is a linear subspace of $\R^\dimin$ with $\bm{a} + V = A$. For any arbitrary $\bm{v} \in V$, we may write $\bm{v} = \bm{a}' - \bm{a}$  for some $\bm{a}' \in A$ and we have 
    \begin{equation}
        \bm{a}^\top \bm{v} = (\bm{a} - 0)^\top (\bm{a}' - \bm{a}) = 0
    \end{equation}
    Hence, $\bm{a} \in V^\perp$ holds. We have so far shown the existence of some vector space $V$ and $\bm{a} \in V^\perp$ with $A = \bm{a} + V$.
    
    We now show that this decomposition $A = \bm{a} + V$ is unique. Assume that there exists another vector space $V' \subset \R^\dimin$ and $\bm{a'}\in V'^\perp$ with $\bm{a} + V = \bm{a'} + V'$. We will show that $V=V'$ and $\bm{a} = \bm{a'}$ must hold. Rearranging, we find $V' = (\bm{a} - \bm{a'}) + V$ and $V = V' - (\bm{a} - \bm{a'})$. From the first equation, we may conclude $\bm{a} - \bm{a'} \in V'$. Since $\bm{a} - \bm{a'}$ is a vector in $V'$, which is closed under subtraction, we find $V' - (\bm{a} - \bm{a'}) = V'$. Hence, substituting into the previous equation, we conclude $V = V'$. We now show that we must also have $\bm{a} = \bm{a'}$. We compute:
    \begin{equation}
    (\bm{a}- \bm{a'})^\top (\bm{a} - \bm{a'}) = \bm{a}^\top (\bm{a} - \bm{a'}) - \bm{a'}^\top (\bm{a} - \bm{a'})
    \end{equation}
    and recall that $\bm{a} - \bm{a'} \in V$ holds.  By assumption, we have $\bm{a}, \bm{a'} \in V^\perp$ and therefore $\bm{a}^\top (\bm{a} - \bm{a'}) = 0$ and $\bm{a'}^\top (\bm{a} - \bm{a'}) = 0$. Hence, $(\bm{a}- \bm{a'})^\top (\bm{a} - \bm{a'}) = 0$ and we conclude from the positive-definiteles of the scalar product that $\bm{a} = \bm{a'}$ does indeed hold. 
\end{proof}

Finally, we show that for almost all parameters of the value map $v$ and the layer normalization module, the translation vector $\bm{a}$ from the previous lemma does not vanish. Again, this will be crucial to show that a function satisfying Proposition~\ref{prop:weighted_sum_inference_function} exists almost always.
\begin{lemma}
    \label{lemma:almost-always-translation}
    Denote $D:=\dimin$ and let $v: \R^D \to \R^D$ be a linear map that is parametrized via a matrix $B \in \R^{D \times D}$ which acts, say, via left-multiplication. Consider parameters $\alpha, \beta \in \R^D$ of the layer normalization. Consider the affine hull of the image of the composition of $v$ and the layer normalization:
    \begin{equation}
        A := \aff \left(\left(v \circ \layernorm\right)\left[\R^D\right]\right)
    \end{equation}
    As detailed in Lemma~\ref{lemma:representing-affine-spaces}, we may write $A$ uniquely as $\bm{a} + V$, where $\bm{a}$ and $V$ depend on the parameters of $v$ and the layer normalization. In the following, we abuse notation by not making this dependence explicit.
    For almost all parameters $(B, \alpha, \beta)$ (w.r.t. the Lebesgue measure on $\R^{D \times D} \times \R^D \times \R^D$), we have $\bm{a} \neq 0$.
\end{lemma}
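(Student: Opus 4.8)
The plan is to make the affine subspace $A$ explicit, reduce the condition $\bm{a} = 0$ to a single linear relation among the parameters, and finish with a Fubini argument. Since the affine hull commutes with affine maps, composing the $\layernorm$ module with left-multiplication by $B$ and invoking Lemma~\ref{lemma:strict_subspace} gives
\begin{equation}
    A = B\left(\diag(\alpha)\mathbbm{1}^\perp + \beta\right) = B\diag(\alpha)\mathbbm{1}^\perp + B\beta .
\end{equation}
Hence the linear part of $A$ is $V = B\diag(\alpha)\big(\mathbbm{1}^\perp\big)$, and by Lemma~\ref{lemma:representing-affine-spaces} the vector $\bm{a}$ is the unique point of $A$ lying in $V^\perp$, i.e. the orthogonal projection of the origin onto $A$. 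In particular $\bm{a} = 0$ if and only if $0 \in A$, which (since $V$ is a subspace closed under negation) is equivalent to $B\beta \in V$.

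Next I would restrict attention to the set of parameters with $B$ invertible and with every entry of $\alpha$ nonzero. Its complement is a null set: $\{\det B = 0\}$ is the zero set of a nonzero polynomial, and $\{\prod_i \alpha_i = 0\}$ is a finite union of coordinate hyperplanes, both of Lebesgue measure zero. On the complement of this null set, $B$ and $\diag(\alpha)$ are invertible, so $B\beta \in B\diag(\alpha)\big(\mathbbm{1}^\perp\big)$ simplifies to $\beta \in \diag(\alpha)\big(\mathbbm{1}^\perp\big)$, equivalently $\diag(\alpha)^{-1}\beta \perp \mathbbm{1}$, which is the single scalar equation
\begin{equation}
    \sum_{i=1}^{D} \frac{\beta_i}{\alpha_i} = 0 .
\end{equation}
Thus, away from a null set of $(B,\alpha)$, the vanishing of $\bm{a}$ is governed entirely by this linear constraint on $\beta$.

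Finally I would conclude by Fubini. For each fixed admissible pair $(B,\alpha)$ the set of bad $\beta$ is the hyperplane $\{\beta : \sum_i \beta_i/\alpha_i = 0\} \subset \R^D$, which has $D$-dimensional Lebesgue measure zero; integrating this slicewise measure over the full-measure set of admissible $(B,\alpha)$ shows that $\{(B,\alpha,\beta) : \bm{a} = 0\}$ is a null set in $\R^{D\times D}\times\R^D\times\R^D$, which is the claim. The step requiring the most care is the reduction: the clean equivalence $\bm{a}=0 \Leftrightarrow \sum_i \beta_i/\alpha_i = 0$ holds only once $B$ and $\diag(\alpha)$ are invertible, so the degenerate parameters must be discarded first, and one should note that the bad set is measurable (hence Fubini applies) since it is cut out by the conditions $\det B \neq 0$, $\prod_i \alpha_i \neq 0$, and the rational equation $\sum_i \beta_i/\alpha_i = 0$.
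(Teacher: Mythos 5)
Your proposal is correct, and its core reduction is the same as the paper's: both use Lemma~\ref{lemma:strict_subspace} to write $A = B(\diag(\alpha)\mathbbm{1}^\perp + \beta)$, observe via Lemma~\ref{lemma:representing-affine-spaces} that $\bm{a}=0$ iff $0\in A$, and discard the nullset of non-invertible $B$. Where you diverge is the final measure-zero argument: the paper stops at the condition $0 \in \diag(\alpha)\mathbbm{1}^\perp + \beta$ and exhibits the bad set as $\GL_D(\R)\times\{(\alpha,-\diag(\alpha)u): \alpha\in\R^D,\, u\in\mathbbm{1}^\perp\}$, i.e.\ the image of a $(2D-1)$-dimensional set under a $C^1$ map, which is null. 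You instead additionally discard the nullset $\{\prod_i\alpha_i=0\}$, reduce the condition to the single scalar equation $\sum_i \beta_i/\alpha_i=0$, and apply Fubini over the resulting hyperplane in $\beta$. Both finishes are valid; yours is somewhat more elementary in that it avoids invoking the fact that $C^1$ images of nullsets are null (which rests on a local Lipschitz estimate), at the cost of one more degenerate set to exclude, and it yields an explicit closed-form description of the bad parameters. Your remark that measurability of the bad set must be checked before applying Fubini is a point the paper's argument sidesteps, and you resolve it correctly by noting the set is cut out by polynomial and rational conditions.
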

\begin{proof}
    Let the set 
    \begin{equation}
        U := \{(B, \alpha, \beta) \in \R^{D \times D}\times \R^D \times \R^D \;|\; \bm{a} = 0\}
    \end{equation}
    represent the parameters for which $\bm{a}$ vanishes. We will show that this is a nullset w.r.t. Lebesgue measure.
    It is a standard fact that the set of non-invertible matrices $\GL_D(\R)^C \subset \R^{D \times D}$ is a Lebesgue-nullset. Hence, it is already sufficient to prove that the set
    \begin{equation}
        U' := U \cap (\GL_D({\R}) \times  \R^D \times \R^D)
    \end{equation}
    is a nullset.
    As we saw from the proof of Lemma~\ref{lemma:representing-affine-spaces}, the vector $\bm{a}$ is exactly the orthogonal projection of the origin onto $A$.
    Hence, $\bm{a} = 0$ holds iff $0 \in A$ holds. Recalling Lemma~\ref{lemma:strict_subspace}, we know
    \begin{equation}
    \begin{aligned}
        A &:= \aff\left((v \circ \layernorm)\left[\R^D\right]\right) = v(\aff \layernorm [\R^D]) \\ 
        & = B(\diag(\alpha)\mathbbm{1}^\perp + \beta)
    \end{aligned}
    \end{equation}
    As we may assume that $B$ is invertible, it has a trivial kernel. Hence, we have $0 \in A$ iff $0 \in \diag(\alpha)\mathbbm{1}^\perp + \beta$. From this requirement, we obtain an explicit expression for the set $U'$:
    \begin{equation}
        \begin{aligned}
            U' &= \GL_D(\R) \times \{(\alpha, \beta) \;|\; \alpha \in \R^D,\, \beta \in -\diag(\alpha)\mathbbm{1}^\perp\} \\
              &= \GL_D(\R) \times \{(\alpha, -\diag(\alpha)u) \;|\; \alpha \in \R^D,\, u \in \mathbbm{1}^\perp\}
        \end{aligned}
    \end{equation}
    We conclude by noting that $\{(\alpha, -\diag(\alpha)u) \;|\; \alpha \in \R^D,\, u \in \mathbbm{1}^\perp\}$ is a nullset in $\R^D \times \R^D$, as it is the image of a nullset under a continuously differentiable function.
\end{proof}

\section{Proof of Proposition~\ref{prop:weighted_mean_inference_function}}
\label{appendix-proof:weighted_mean_inference_function}
\begin{proof}
    Assume by contradiction that such a function $f: \R^D \to \R$ exists. Consider now the setting in which we have $K:=1$ and $\tilde\vtheta_1 := 0$. Denote the resulting attention matrix by $\mGamma^{(1)}$ and the resulting update code $\vu_1$ by $\vw^{(1)}_1$. Consider also the setting in which we have $K:=2$ and $\tilde\vtheta_1 = \tilde\vtheta_2 := 0$. Denote the resulting attention matrix by $\mGamma^{(2)}$ and the resulting update codes $\vu_1, \vu_2$ by $\vw_1^{(2)}, \vw_2^{(2)}$. One may now easily verify that all entries of $\mGamma^{(1)}$ are $1$ and that all entries of $\mGamma^{(2)}$ are $1/2$. Hence, we have 
    \begin{equation}
        w_1^{(1)} = \frac{\sum_{n=1}^N \gamma_{n, 1}^{(1)} \vval_n}{\sum_{n=1}^N\gamma_{n, 1}^{(1)}} = \frac{\sum_{n=1}^N \vval_n}{N}
        \label{eq:update_codes_1}
    \end{equation}
    and
    \begin{equation}
        w_1^{(2)} = w_2^{(2)} = \frac{\sum_{n=1}^N \frac{1}{2}\vval_n}{\sum_{n=1}^N\frac{1}{2}} = \frac{\sum_{n=1}^N \vval_n}{N}
        \label{eq:update_codes_2}
    \end{equation}
    By equation~\eqref{eq:update_codes_2} and our assumption (namely, that equation~\eqref{eq:infer_column_sums_mean} holds), we have:
    \begin{equation}
        f\left(\sum_{n=1}^N \vval_n / N\right) = f\left(w_1^{(2)}\right) = \frac{\sum_{n=1}^N \gamma_{n, k}^{(2)}}{N} = \frac{1}{2}
        \label{eq:contradiction}
    \end{equation}
    At the same time, we also deduce from equation~\eqref{eq:update_codes_1}:
    \begin{equation}
        f\left(\sum_{n=1}^N \vval_n / N\right) = f\left(w_1^{(1)}\right) = \frac{\sum_{n=1}^N \gamma_{n, k}^{(1)}}{N} = 1
    \end{equation}
    This contradicts equation~\eqref{eq:contradiction}.
\end{proof}

\section{Proof of Proposition~\ref{prop:weighted_sum_inference_function}}
\label{appendix-proof:weighted_sum_inference_function}
\begin{proof}
    We recall from Lemma~\ref{lemma:almost-always-translation} that for almost all parameters of the value map $v$ and the layer normalization module, we may choose a vector $\bm{a} \in \R^D \setminus \{0\}$ and a vector space $W \subset \R^D$ such that $\bm{a} \in W^\perp$ holds and for any input $\tilde \vx_n$, the corresponding values $\vval_n$ lie in $\bm{a} + W$. Given fixed parameters (that do not lie in the nullset outlined in Lemma~\ref{lemma:almost-always-translation}), fix such a vector $\bm{a}$.
    
    Now, we define the function $f: \R^D \to \R$ via:
    \begin{equation}
        f(\vu) := C\frac{\bm{a}^\top \vu}{N\Vert \bm{a} \Vert_2^2}
    \end{equation}
    The values $\vval_n$ may be written as
    \begin{equation}
        \vval_n = \bm{a} + \bm{w}_n
    \end{equation}
    where $\bm{w}_n \in W$ and $\bm{a}^\top \bm{w}_n = 0$ holds. Hence, recalling that we define $\vu_k := \frac{1}{C}\sum_{n=1}^N \gamma_{n, k}\vval_n$, we may now compute:
    \begin{equation}
    \begin{aligned}
        f(\vu_k) &= C\frac{\bm{a}^\top }{N\Vert\bm{a}\Vert_2^2} \frac{1}{C}\sum_{n=1}^N\gamma_{n, k}(\bm{a} + \bm{w}_n) = \frac{1}{N\Vert\bm{a}\Vert_2^2} \sum_{n=1}^N\gamma_{n, k}\bm{a}^\top (\bm{a} + \bm{w}_n) \\
& = \frac{1}{N\Vert\bm{a}\Vert_2^2}\sum_{n=1}^N\gamma_{n, k}\bm{a}^\top \bm{a} \\
        &= \frac{\sum_{n=1}^N \gamma_{n, k}}{N}
    \end{aligned}
    \end{equation}
    which is what we wanted to show.
\end{proof}
\section{Technical Details Regarding Object Discovery on CLEVR}
\label{sec:clevr_architecture}
While we closely follow the descriptions of~\citep{original_paper}, we use a \emph{re-implementation} of the method in our experiments.

\paragraph{Data} As in~\citep{original_paper}, we use the extended CLEVR dataset that is provided in~\citep{multi_object_datasets}. This version of the dataset consists of 100,000 images in total, each being of dimension $320 \times 240$. We follow~\citep{original_paper,iodine,monet} in the pre-processing of this data: We use 70,000 images for training and hold out 15,000 for validation and testing. We perform a square center crop of size 192 to increase the space occupied by objects. The cropped images are then bilinearly scaled to shape $128\times 128$. The corresponding ground-truth segmentation masks are pre-processed analogously, using nearest-neighbor interpolation in place of bi-linear interpolation. In contrast to~\citep{original_paper}, we augment the data by performing random horizontal flips. Before feeding it to the autoencoder, the RGB data is scaled to the interval $[-1, 1]$.

\paragraph{Architecture} We follow the autoencoder architecture described in~\citep{original_paper} as closely as possible. Conceptually, we divide the autoencoder into three distinct entities: The \emph{encoder} processes the input image and produces a set of tokens. The \emph{Slot Attention module} processes these tokens and yields a latent slot representation. The \emph{decoder} decodes the latent representation into a reconstruction of the input.

The encoder consists of a convolutional network, which we describe (as in~\citep{original_paper}) in Table~\ref{tab:clevr_encoder}.
\begin{table}[htbp]
    \centering
    \renewcommand{\arraystretch}{1.2}
    \begin{tabular}{lcccc}
         \toprule
         Type & In Shape & Out Shape & Activation & Comment  \\
         \midrule
         Conv 5 × 5 & $128 \times 128 \times 3$ & $128 \times 128 \times 64$ & ReLU & stride:1 \\ 
         Conv 5 × 5 &  $128 \times 128 \times 64$ &  $128 \times 128 \times 64$ & ReLU & stride:1 \\ 
         Conv 5 × 5 &  $128 \times 128 \times 64$ &  $128 \times 128 \times 64$ & ReLU & stride:1 \\ 
         Conv 5 × 5 &  $128 \times 128 \times 64$ &  $128 \times 128 \times 64$ & ReLU & stride:1 \\ 
         Pos. Embed &  $128 \times 128 \times 64$ &  $128 \times 128 \times 64$ & - & - \\
         Spatially Flatten &  $128 \times 128 \times 64$ &  $(128\cdot128) \times 64$ & - & - \\
         Layer Norm & $(128\cdot128) \times 64$ & $(128\cdot128) \times 64$ & - & per 64-dim token \\
         Affine & $(128\cdot128) \times 64$ & $(128\cdot128) \times 64$ & ReLU & per 64-dim token \\
         Affine & $(128\cdot128) \times 64$ & $(128\cdot128) \times 64$ & - & per 64-dim token \\
         \bottomrule
    \end{tabular}
    \caption{Encoder network for experiments on CLEVR}
    \label{tab:clevr_encoder}
\end{table}

We implement the positional embedding as in~\citep{original_paper}. Namely, to positionally embed a tensor $X$ of shape $W \times H \times C$, we construct a $W \times H \times 4$ tensor $P$ in which each of the four channels is a linear ramp spanning between 0 and 1, either progressing horizontally or vertically and in either of the two possible directions (e.g. left or right). In order to embed the feature $X_{i, j, :}$, we learn an affine layer $\R^4 \to \R^C$ and compute the embedded feature:
\begin{equation}
    X_{i, j, :} + \textproc{affine(}P_{i, j, :}\textproc{)}
\end{equation}

The resulting set of input tokens is then processed by the Slot Attention module, which we implement as described in~\citep{original_paper}. The key, query, and value maps $q$, $k$, $v$ use the common dimension $D=64$. Slots are also 64-dimensional.  The residual MLP that is used to update the slot latents has a single hidden layer of size 128. 

We decode each slot separately, using a spatial broadcast decoder. Once again, we closely follow the approach of~\citep{original_paper} and detail the decoder architecture in Table~\ref{tab:clevr_decoder}. 

\begin{table}[htbp]
    \centering
    \renewcommand{\arraystretch}{1.2}
    \begin{tabular}{lcccp{25mm}}
         \toprule
         Type & In Shape & Out Shape & Activation & Comment  \\
         \midrule
          Spatial Broadcast & $64$ & $8 \times 8 \times 64$ & - & for single slot \\
          Pos. Embed & $8 \times 8 \times 64$ & $8 \times 8 \times 64$ & - & -  \\
          Transposed Conv 5 × 5 & $8 \times 8 \times 64$ & $16 \times 16 \times 64$ & ReLU & stride:2 \newline  padding:2 \newline out padding: 1 \\ 
          Transposed Conv 5 × 5 & $16 \times 16 \times 64$ & $32 \times 32 \times 64$ & ReLU & as above \\ 
          Transposed Conv 5 × 5 & $32 \times 32 \times 64$ & $64 \times 64 \times 64$  & ReLU & as above \\ 
          Transposed Conv 5 × 5 & $64 \times 64 \times 64$  & $128 \times 128 \times 64$  & ReLU & as above \\ 
          Transposed Conv 5 × 5 & $128 \times 128 \times 64$ & $128 \times 128 \times 64$ & ReLU & stride:1 \newline  padding:2 \\ 
          Transposed Conv 3 × 3 & $128 \times 128 \times 64$ & $128 \times 128 \times 4$ & - & stride:1 \newline  padding:1 \\ 
         \bottomrule
    \end{tabular}
    \caption{Decoder network for experiments on CLEVR}
    \label{tab:clevr_decoder}
\end{table}
The $4$ channels of the output of the decoder are split into RGB channels and an unnormalized alpha channel. The alpha channels are normalized via a softmax operation across all slots, and the RGB reconstructions are blended to produce an entire reconstruction.

\paragraph{Training} We closely follow the training procedure of~\citep{original_paper}. Namely, we train the autoencoder with an $\ell^2$ reconstruction loss, utilize 3 Slot Attention iterations during training, and use an Adam~\citep{adam} optimizer. The models are trained for 500,000 steps. As the authors of~\citep{original_paper}, we linearly warm up the learning rate over the course of the first 10,000 steps, after which it attains a peak value of $4\cdot(0.5)^{0.1}\cdot 10^{-4}$. Subsequently, we decay it over the course of the remaining steps, with a half life of 100,000 steps. We use a batch size of 64.

\paragraph{Evaluation}During evaluation, we use 5 Slot Attention iterations.

\section{Technical Details Regarding Object Discovery on MOVi-C}
While we closely follow the descriptions of~\citep{dinosaur}, we use a \emph{re-implementation} of the method in our experiments.

\paragraph{Data} We use the MOVi-C dataset from~\citep{kubric}. In total, it contains 10,986 video sequences, each consisting of 24 frames. We hold out 250 of the sequence for validation and 999 for testing. Following~\citep{dinosaur}, we pre-process the frames from the dataset by bicubicly resizing them to shape $224 \times 224$. We use a pretrained vision transformer to pre-compute image features from these resized frames. Specifically, we use the model \texttt{vit\_base\_patch8\_224\_dino} from the timm~\citep{timm} repository. After dropping the class token, this model provides a $28 \times 28 \times 768$ feature map for each frame. We save these feature maps at half precision (16 bit floating point) and use them during training. Alongside, we save the resized frames and analogously resized (via nearest-neighbor interpolation) ground truth segmentations.

\paragraph{Architecture} We closely follow the MLP-based autoencoder architecture detailed in~\citep{dinosaur}. The goal of this autoencoder is to encode and decode the pre-computed ViT features. Importantly, it does not operate on RGB images. As before, we conceptually divide the autoencoder into the \emph{encoder}, the \emph{Slot Attention module}, and the \emph{decoder}.

The encoder processes each ViT feature separately via a shared MLP. In contrast to the encoder we used in the experiments on the CLEVR dataset, no positional embedding is employed, as the ViT features still contain a sufficient amount of positional information (see the discussions in~\citep{dinosaur}). We provide a detailed description of this architecture in Table~\ref{tab:dinosaur_encoder}.
\begin{table}[htbp]
    \centering
    \begin{tabular}{lcccp{25mm}}
         \toprule
         Type & In Shape & Out Shape & Activation & Comment  \\
         \midrule
         Layer Norm & $28 \times 28 \times 768$ & $28 \times 28 \times 768$ & - & per feature \\
         Affine & $28 \times 28 \times 768$ & $28 \times 28 \times 768$ & ReLU & per feature \\
         Affine & $28 \times 28 \times 768$ & $28 \times 28 \times 128$ & - & per feature \\
         \bottomrule
    \end{tabular}
    \caption{Encoder network for experiments on MOVi-C}
    \label{tab:dinosaur_encoder}
\end{table}

The set of tokens that is produced by the encoder is processed by the Slot Attention module to produce a latent slot representation. Slots, keys, values, and queries are 128-dimensional. The residual MLP that is used to update the slot latents has a single hidden layer of dimension 512.

In order to produce a reconstruction of the ViT features, an MLP-based decoder architecture is used. Conceptually, the decoder resembles the one we used in the experiments on the CLEVR dataset: Each slot is decoded separately into a partial reconstruction and an unnormalized alpha channel. A complete reconstruction is formed by normalizing the alpha channels across slots and blending the partial reconstructions. Each slot is broadcasted spatially before decoding and a positional embedding is added. Once again, the decoder consists of an MLP that operates on each spatial feature separately. We provide a detailed description of the architecture in Table~\ref{tab:dinosaur_decoder}

\begin{table}[htbp]
    \centering
    \centering
    \begin{tabular}{lcccp{25mm}}
         \toprule
         Type & In Shape & Out Shape & Activation & Comment  \\
         \midrule
         Spatial Broadcast & $128$ & $28 \times 28 \times 128$ & - & - \\
         Pos. Embedding  & $28 \times 28 \times 128$ & $28 \times 28 \times 128$ & - & - \\
         Affine & $28 \times 28 \times 128$ & $28 \times 28 \times 1024$ & ReLU & per feature \\
         Affine & $28 \times 28 \times 1024$ & $28 \times 28 \times 1024$ & ReLU & per feature \\
         Affine & $28 \times 28 \times 1024$ & $28 \times 28 \times 1024$ & ReLU & per feature \\
         Affine & $28 \times 28 \times 1024$ & $28 \times 28 \times (768 + 1)$ & - & per feature \\
         \bottomrule
    \end{tabular}
    \caption{Decoder network for experiments on MOVi-C}
    \label{tab:dinosaur_decoder}
\end{table}
The positional embedding used in Table~\ref{tab:dinosaur_decoder} differs significantly from the one we used in the experiments on CLEVR. Namely, given an input tensor $X$ of shape $W\times H\times C$, we positionally embed the feature $X_{i, j, :}$ by learning a tensor $P$ of shape $W \times H\times C$ and computing:
\begin{equation}
    X_{i, j, :} + P_{i, j, :}
\end{equation}

\paragraph{Training} We follow the training procedure detailed in~\citep{dinosaur}. Namely, we train the autoencoder via an $\ell^2$ reconstruction loss and use 3 Slot Attention iterations during training. The batch size is 64 and we employ an Adam optimizer. As before, a learning rate schedule consisting of a linear increase and exponential decay is used. Here, the peak learning rate is $4\cdot 10^{-4}$, which is reached after 10,000 steps. Afterwards, the learning rate decays with a half life of 100,000 steps. The models are trained for 500,000 steps in total.

\paragraph{Evaluation} During evaluation, we also utilize 3 Slot Attention iterations. Since the alpha masks that are produced by our model are of shape $28 \times 28$, we cannot directly compare them to ground truth segmentations, which are of shape $224 \times 224$. We follow the approach of~\citep{dinosaur} and bi-linearly upscale the alpha masks to shape $224 \times 224$ before deriving segmentations, which we then compare to the ground-truth.

\section{Pseudocode}
\label{appendix:pseudocode}
In Algorithms~\ref{alg:weighted_sum} and~\ref{alg:batch_norm}, we illustrate how the weighted sum and batch norm variants differ from the weighted mean variant in pseudo PyTorch code. We illustrate this in a diff format.

\begin{algorithm}[htp]
\caption{Diff of Weighted Sum Variant}
\label{alg:weighted_sum}
\begin{minted}[linenos]{diff}
    ...
    bs, N, d_in = inputs.shape
    k, v = self.key_map(inputs), self.value_map(inputs)
    for idx in range(num_iters):
        slots_prev = slots
        slots = self.norm_slots(slots)
        q = self.query_map(slots)
        dots = torch.einsum("bid,bjd->bij", q, k) / np.sqrt(q.size(-1))
        attn = dots.softmax(dim=1)
-       attn = (attn + eps) / (attn + eps).sum(dim=-1, keepdim=True)
        updates = torch.einsum("bjd,bij->bid", v, attn)
+       updates = updates / N
        ...
\end{minted}
\end{algorithm}
\begin{algorithm}[htp]
\caption{Diff of Batch Norm Variant}
\label{alg:batch_norm}
\begin{minted}[linenos]{diff}
    ...
    bs, N, d_in = inputs.shape
    k, v = self.key_map(inputs), self.value_map(inputs)
+   var, mean = None, None
    for idx in range(num_iters):
        slots_prev = slots
        slots = self.norm_slots(slots)
        q = self.query_map(slots)
        dots = torch.einsum("bid,bjd->bij", q, k) / np.sqrt(q.size(-1))
        attn = dots.softmax(dim=1)
-       attn = (attn + eps) / (attn + eps).sum(dim=-1, keepdim=True)
        updates = torch.einsum("bjd,bij->bid", v, attn)
+       if idx == 0 and self.training:
+           var, mean = torch.var_mean(updates, correction=0)
+           self.update_buffers(var, mean)
+       elif idx == 0 and not self.training:
+           var, mean = self.var_buffer, self.mean_buffer
+       updates = self.alpha * (updates - mean) / torch.sqrt(var + eps) + self.beta
        ...
\end{minted}
\end{algorithm}

\vspace{3cm}
\section{Visual Results}
\label{appendix:visualizations}
In Tables~\ref{tab:visual_results_clevr} and~\ref{tab:visual_results_movic}, we present visual results from object discovery on the CLEVR10 and MOVi-C10 datasets, respectively. On CLEVR, we show both reconstructions and segmentations, while we only provide segmentations on MOVi-C, as the reconstructions are high-dimensional ViT features. In both cases, we perform inference with a high slot count (21).
\begin{table}[hp]
    \centering
    \begin{tabular}{c|cccc}
         Input & Baseline & Layer Norm & Weighted Sum & Batch Norm \\
         \hline
         \includegraphics[width=1in]{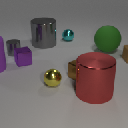} & \includegraphics[width=1in]{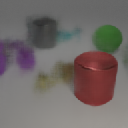}  &
         \includegraphics[width=1in]{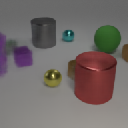} & \includegraphics[width=1in]{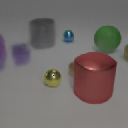} & \includegraphics[width=1in]{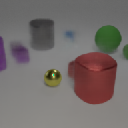}\\
         & \includegraphics[width=1in]{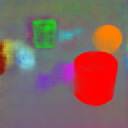} &  \includegraphics[width=1in]{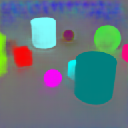} & \includegraphics[width=1in]{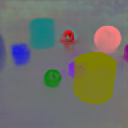} & \includegraphics[width=1in]{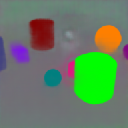} \\
         \hline
         \includegraphics[width=1in]{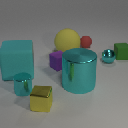} & \includegraphics[width=1in]{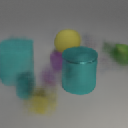}  &
         \includegraphics[width=1in]{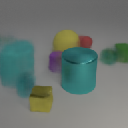} & \includegraphics[width=1in]{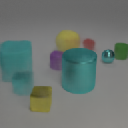} & \includegraphics[width=1in]{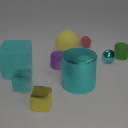}\\
         & \includegraphics[width=1in]{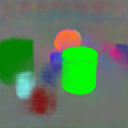} &  \includegraphics[width=1in]{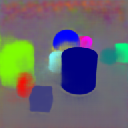} & \includegraphics[width=1in]{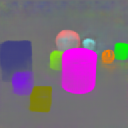} & \includegraphics[width=1in]{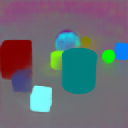}\\
         \hline
         \includegraphics[width=1in]{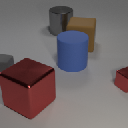} & \includegraphics[width=1in]{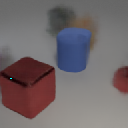}  &
         \includegraphics[width=1in]{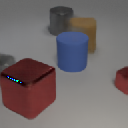} & \includegraphics[width=1in]{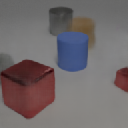} & \includegraphics[width=1in]{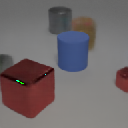}\\
         & \includegraphics[width=1in]{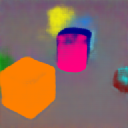} &  \includegraphics[width=1in]{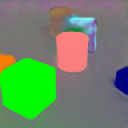} & \includegraphics[width=1in]{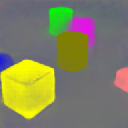} & \includegraphics[width=1in]{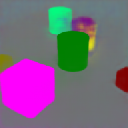}
    \end{tabular}
    \caption{Visual results for object discovery on CLEVR10, trained on CLEVR6 with 7 slots. Showing reconstructions and segmentations. Evaluated with 21 slots.}
    \label{tab:visual_results_clevr}
\end{table}

\begin{table}[hp]
    \centering
    \begin{tabular}{cccc}
         Baseline & Layer Norm & Weighted Sum & Batch Norm \\ 
         \hline
         \includegraphics[width=1.5in]{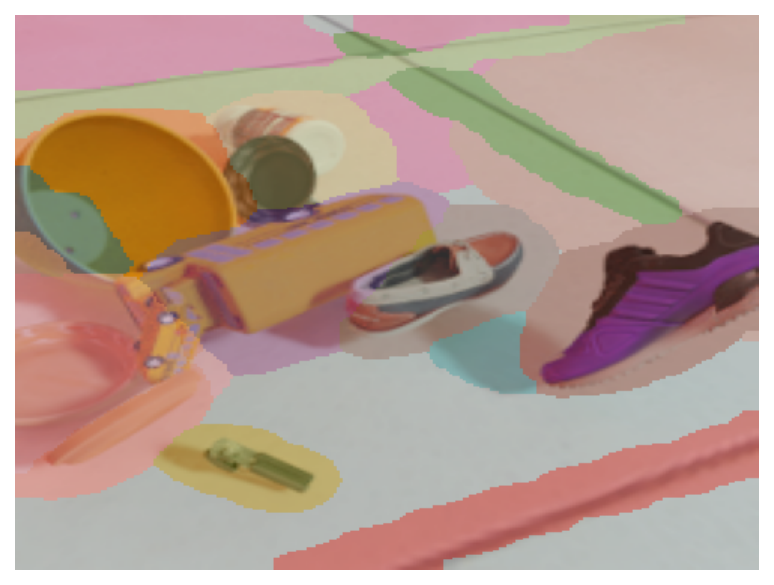}& \includegraphics[width=1.5in]{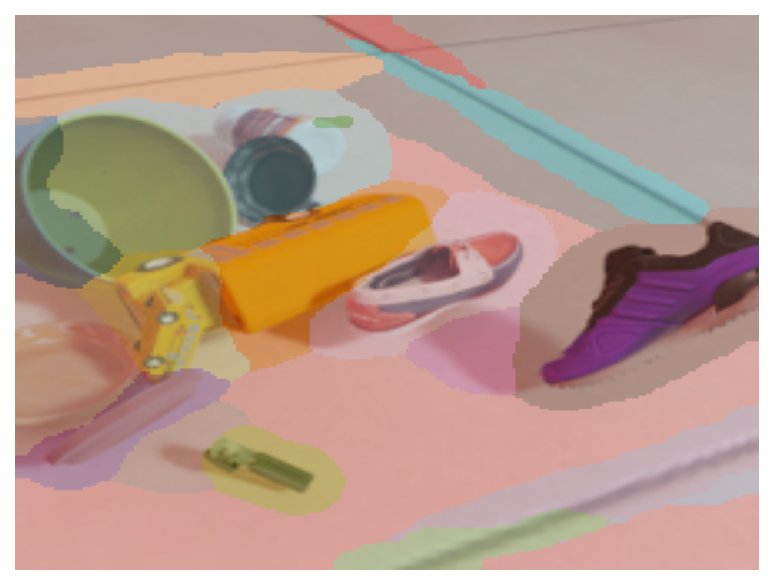} & \includegraphics[width=1.5in]{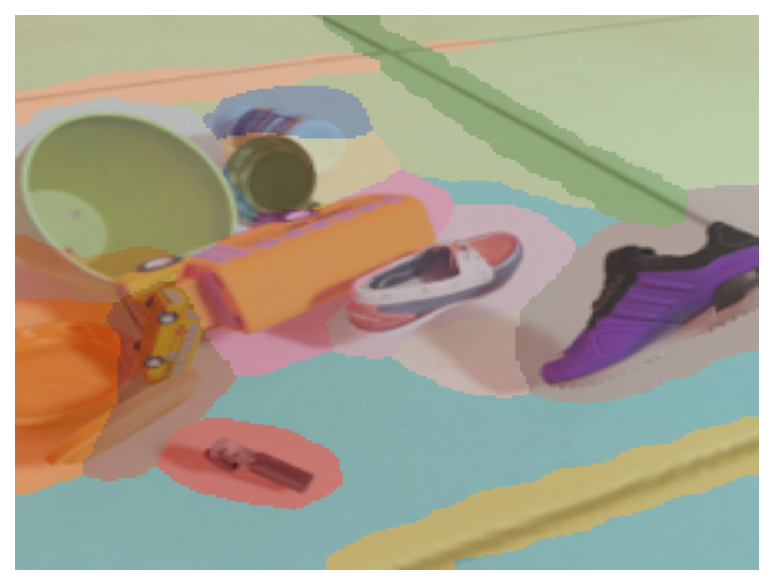} & \includegraphics[width=1.5in]{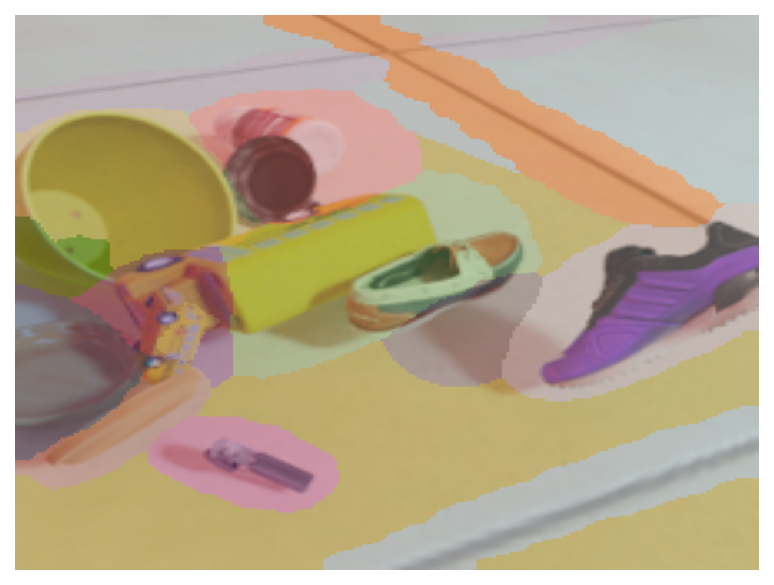} \\
         \hline
         \includegraphics[width=1.5in]{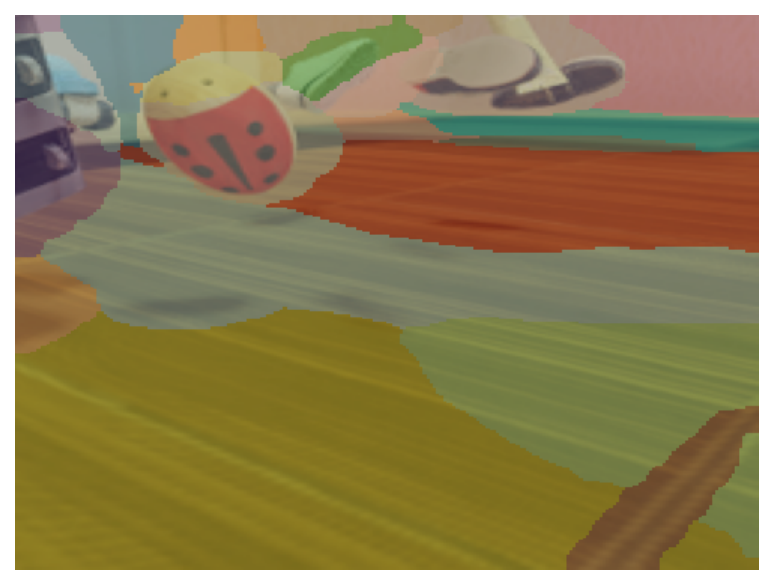}& \includegraphics[width=1.5in]{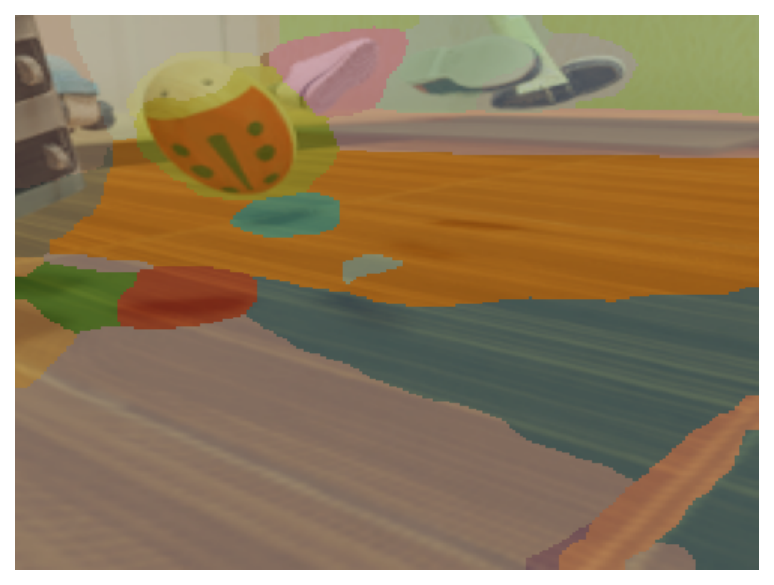} & \includegraphics[width=1.5in]{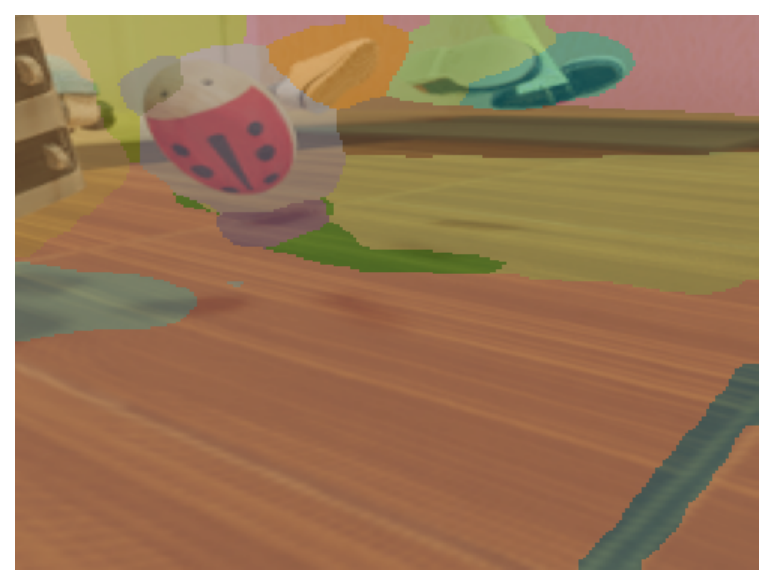} & \includegraphics[width=1.5in]{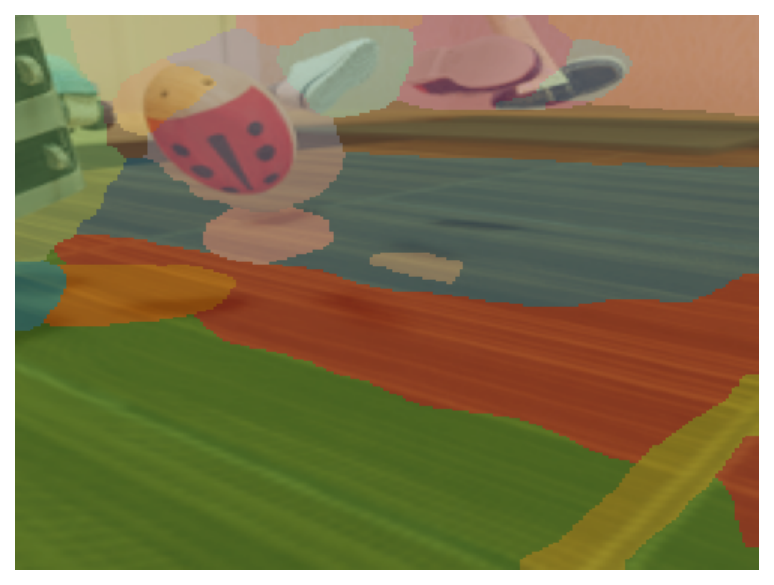} \\
         \hline
         \includegraphics[width=1.5in]{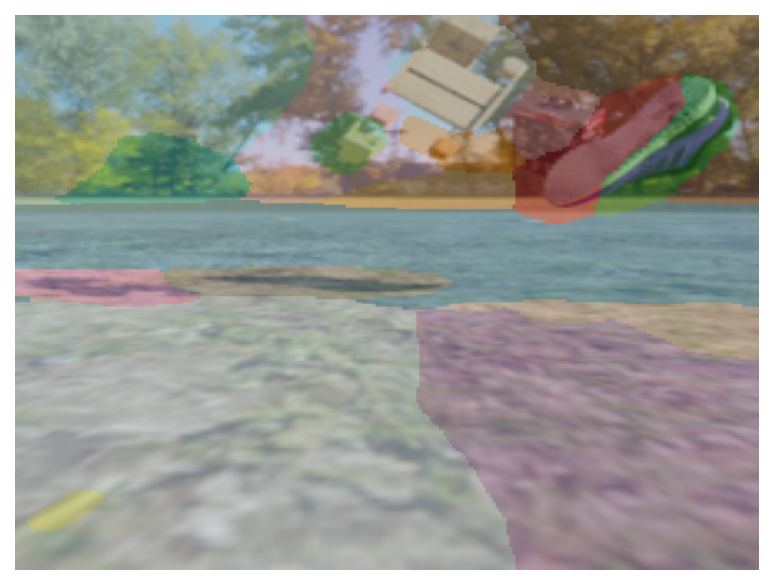}& \includegraphics[width=1.5in]{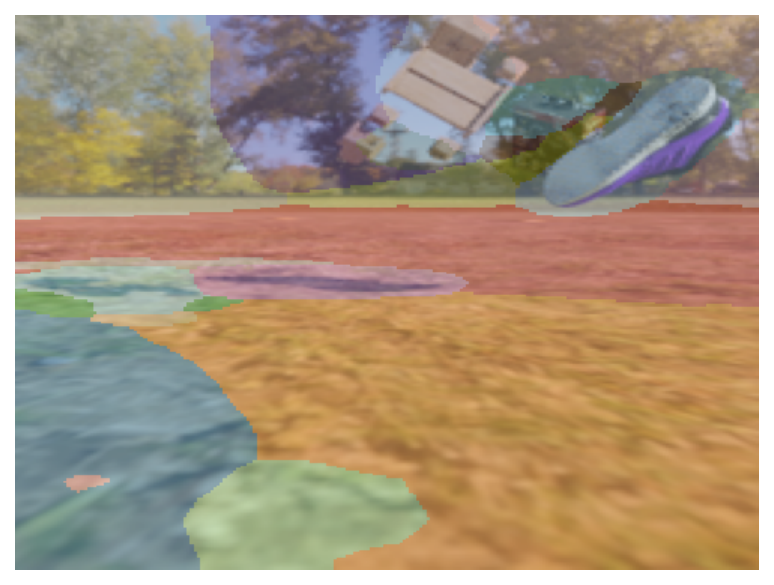} & \includegraphics[width=1.5in]{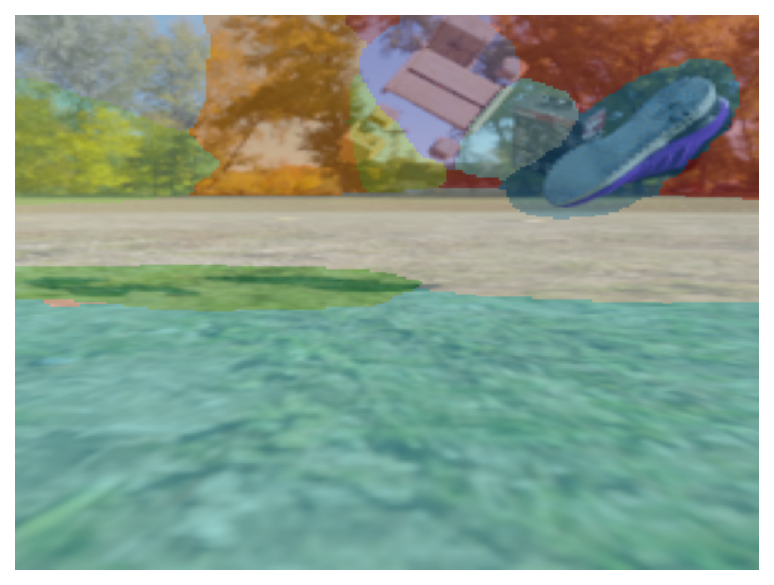} & \includegraphics[width=1.5in]{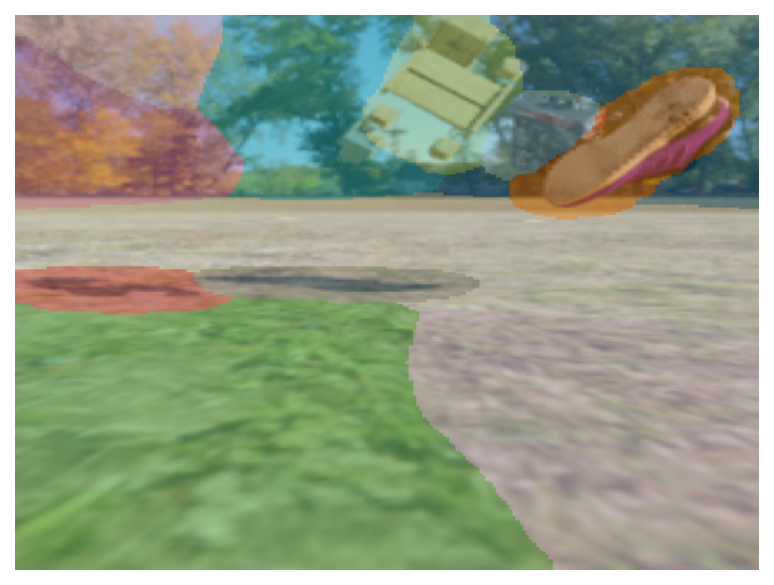} \\
         \hline
         \includegraphics[width=1.5in]{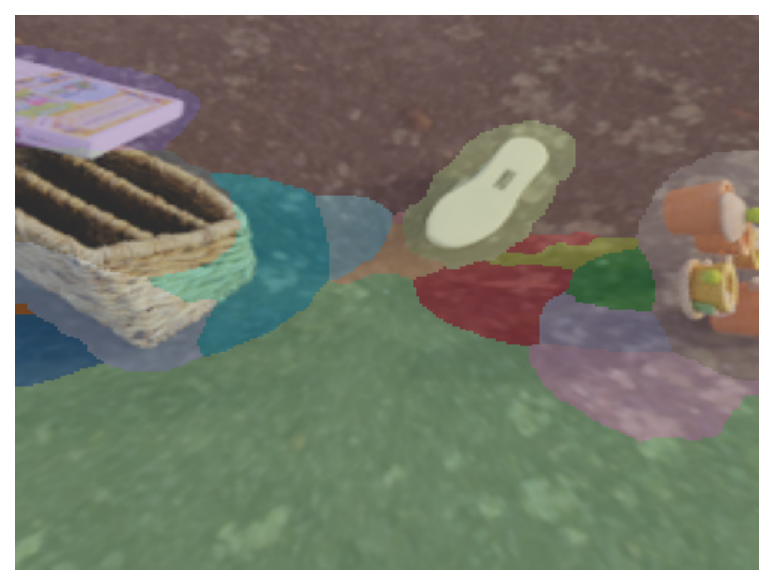}& \includegraphics[width=1.5in]{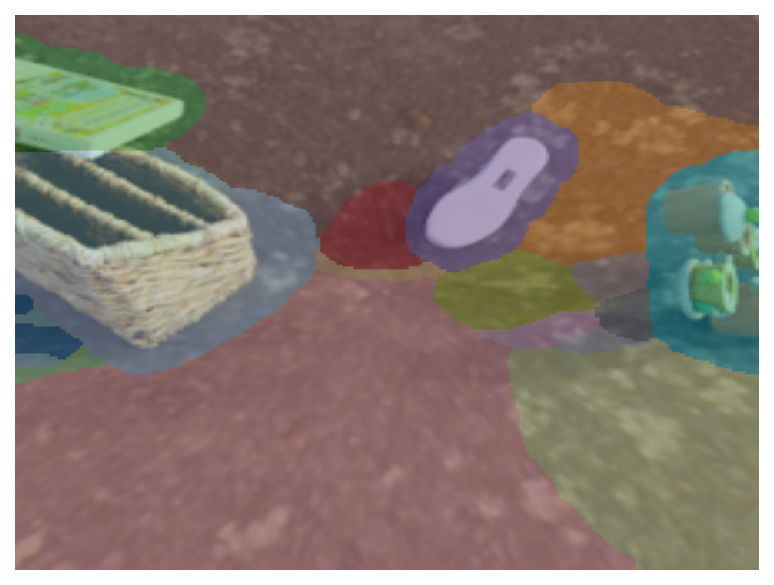} & \includegraphics[width=1.5in]{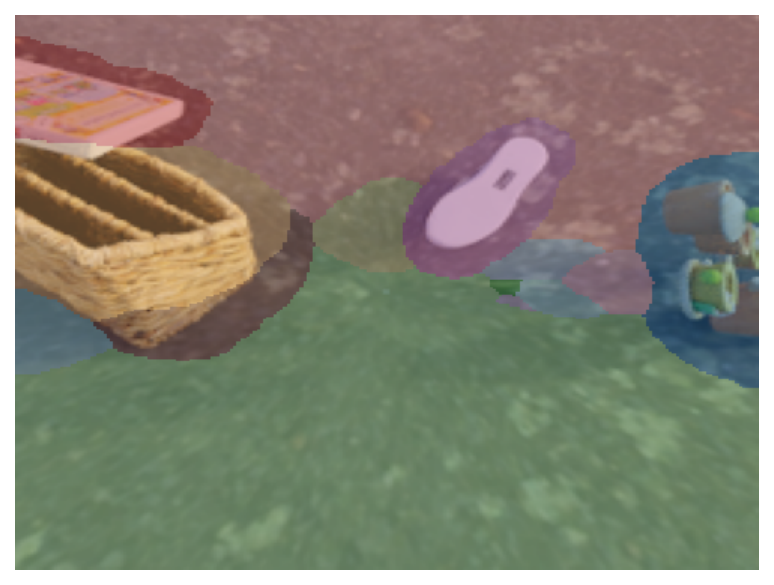} & \includegraphics[width=1.5in]{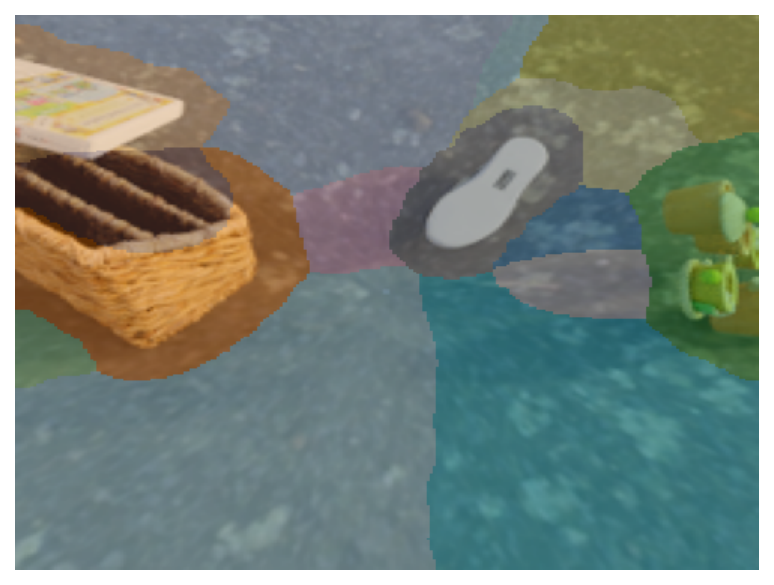} \\
    \end{tabular}
    \caption{Visual results for object discovery on MOViC10, trained on MOViC10 with 11 slots. Evaluated with 21 slots.}
    \label{tab:visual_results_movic}
\end{table}

\section{Property Prediction}
\label{appendix:property-prediction}
We further illustrate the proposed normalizations on a property prediction task on the CLEVR10 dataset. We closely follow the experimental setup detailed by~\citet{original_paper}, using three seeds per variant. We train the models on the CLEVR10 dataset using 10 slots. In Figure~\ref{fig:f-ari-property-prediction}, we plot how the foreground ARI varies as the number of slots is modified during inference. Consistent with our observations on the object discovery task, we find that the weighted mean and layer norm variants suffer from excess slots, while the proposed normalizations are robust to them. Overall, we find that the batch norm variant performs best w.r.t. foreground segmentation quality.
\begin{figure}[ht]
    \centering
    \includegraphics[width=0.3\textwidth]{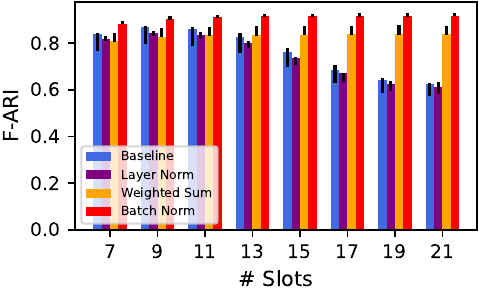}
    \caption{F-ARI ($\uparrow$) for property prediction on CLEVR10}
    \label{fig:f-ari-property-prediction}
\end{figure}

In Figure~\ref{fig:map-property-prediction}, we additionally show the mean average prediction at various distance thresholds, as defined in~\citep{original_paper}. We observe a similar behavior as before, namely that the proposed variants are robust to high slot counts during inference, while the weighted mean and layer norm variants are not. However, we generally find that the weighted mean variant appears to perform best at low slot counts.
\begin{figure}[ht]
    \centering
    \begin{subfigure}[t]{.3\textwidth}
            \centering
            \includegraphics[width=\textwidth]{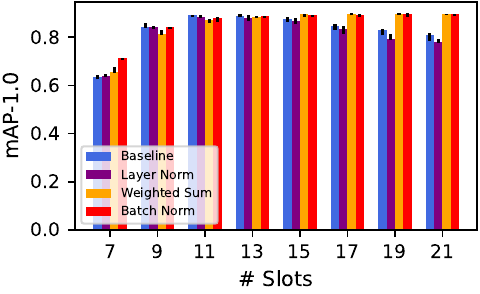}
            \caption{mAP@1.0 ($\uparrow$) on CLEVR10}
    \end{subfigure} \hfill
    \begin{subfigure}[t]{.3\textwidth}
            \centering
            \includegraphics[width=\textwidth]{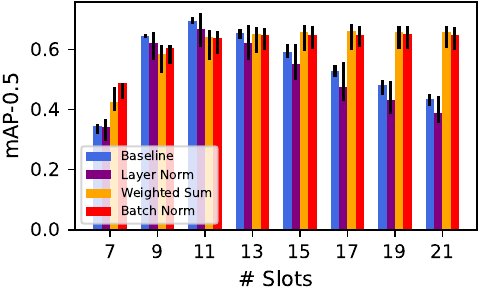}
            \caption{mAP@0.5 ($\uparrow$) on CLEVR10}
    \end{subfigure} \\
    \begin{subfigure}[t]{.3\textwidth}
            \centering
            \includegraphics[width=\textwidth]{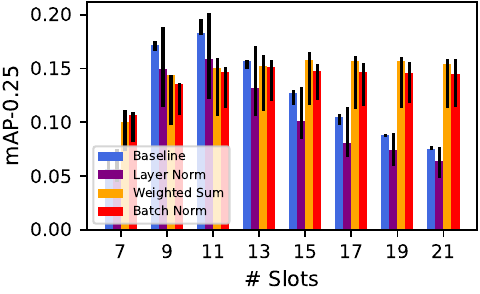}
            \caption{mAP@0.25 ($\uparrow$) on CLEVR10}
    \end{subfigure} \hfill
    \begin{subfigure}[t]{.3\textwidth}
            \centering
            \includegraphics[width=\textwidth]{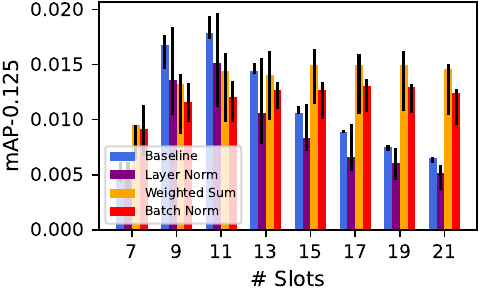}
            \caption{mAP@0.125 ($\uparrow$) on CLEVR10}
    \end{subfigure}
    \caption{Mean average precision at different distance thresholds for property prediction on CLEVR.}
    \label{fig:map-property-prediction}
\end{figure}

\pagebreak[4]
\section{Ablation of Normalization Schemes}
\label{sec:ablations}
In this section, we study two alternative normalizations methods. Firstly, we consider the weighted sum variant with the scaling parameter chosen as $C=1$. We refer to this method as "unnormalized". Secondly, we consider a variant which ablates the normalization across the batch axis from the batch normalized variant. I.e., we compute mean and variance for each instance separately across only the slot and layer axes. Hence, we also do not have to keep a moving average of the normalizing statistics for inference. Instead the normalization behaves identically during training and inference. As in the batch normalized variant, two scalar values $\alpha$ and $\beta$ are learned. We refer to this variant as "K-D-Layer Norm", to reflect that we are performing a layer normalization across the slot (K) and layer (D) axes. We perform experiments on the property prediction task on CLEVR with three seeds per variant. 

The unnormalized variant fails to obtain object-centric behavior, as becomes apparent from the low foreground ARI across all slot counts, as shown in Figure~\ref{fig:f-ari-ablation}. In Figure~\ref{fig:map-ablation}, we observe that the mean average precision suffers correspondingly. This underscores the importance of scaling the update codes appropriately to achieve competitive performance.
While the K-D-Layer Norm performs better, we find that as the other baseline variants, it is not robust to high slot counts. Moreover, on this specific task, it seems to generally underperform in terms of mAP when compared to the other object-centric variants.  
\begin{figure}[ht]
    \centering
    \includegraphics[width=0.3\textwidth]{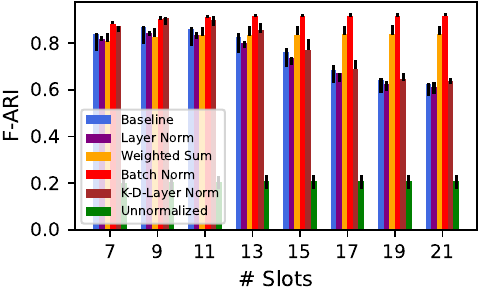}
    \caption{F-ARI ($\uparrow$) for property prediction on CLEVR10}
    \label{fig:f-ari-ablation}
\end{figure}

\begin{figure}[ht]
    \centering
    \begin{subfigure}[t]{.3\textwidth}
            \centering
            \includegraphics[width=\textwidth]{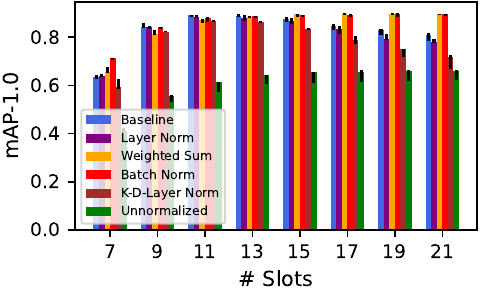}
            \caption{mAP@1.0 ($\uparrow$) on CLEVR10}
    \end{subfigure} \hfill
    \begin{subfigure}[t]{.3\textwidth}
            \centering
            \includegraphics[width=\textwidth]{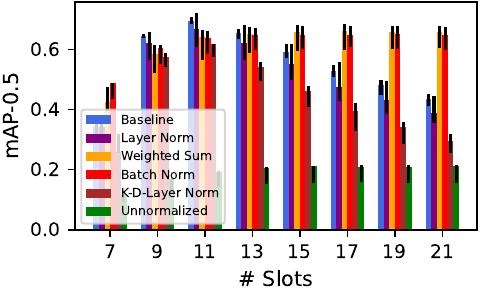}
            \caption{mAP@0.5 ($\uparrow$) on CLEVR10}
    \end{subfigure} \\
    \begin{subfigure}[t]{.3\textwidth}
            \centering
            \includegraphics[width=\textwidth]{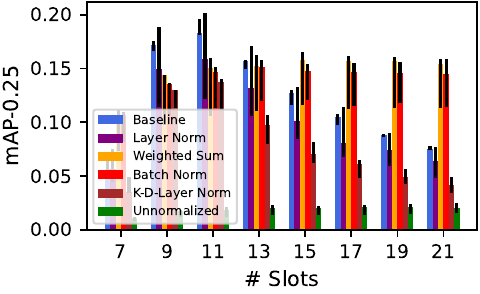}
            \caption{mAP@0.25 ($\uparrow$) on CLEVR10}
    \end{subfigure} \hfill
    \begin{subfigure}[t]{.3\textwidth}
            \centering
            \includegraphics[width=\textwidth]{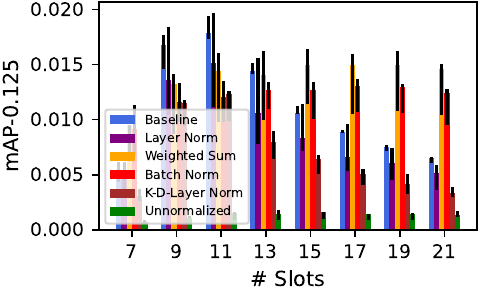}
            \caption{mAP@0.125 ($\uparrow$) on CLEVR10}
    \end{subfigure}
    \caption{Mean average precision at different distance thresholds for property prediction on CLEVR.}
    \label{fig:map-ablation}
\end{figure}

\end{document}